\documentclass[journal]{IEEEtran}

\usepackage[numbers,square,comma,sort&compress]{natbib}
\usepackage{etoolbox}
\usepackage{amsmath,amssymb,amsfonts}
\usepackage{amsthm}
\usepackage{graphicx}
\usepackage{booktabs}
\usepackage{multirow}
\usepackage{colortbl}
\usepackage{indentfirst}
\usepackage{amsmath}
\usepackage{bbding}
\usepackage{booktabs}
\usepackage{array}
\usepackage{algorithm}
\usepackage{algpseudocode}
\usepackage[table]{xcolor}
\usepackage{url}
\usepackage{xr-hyper}          
\usepackage{hyperref}

\newtheorem{proposition}{Proposition}

\makeatletter
\patchcmd{\@makecaption}{\centering\@IEEEtabletopskipstrut{\normalfont\footnotesize #1}\\{\normalfont\footnotesize\scshape #2}}{\@IEEEtabletopskipstrut{\normalfont\footnotesize #1.}~{\normalfont\footnotesize #2}}{}{\PackageWarning{main}{table caption patch failed}}
\makeatother

\newcommand{\cmark}{\textcolor{green!55!black}{\checkmark}}
\newcommand{\xmark}{\textcolor{red!70!black}{$\times$}}

\begin{document}

\title{Answer with Evidence: Consistency-Aware Grounded Visual Question Answering for Roadside Traffic Scenes}

\author{Runwei~Guan,
        Rongsheng~Hu,
        Shangshu~Chen,
        Ningwei~Ouyang,
        Shaofeng~Liang,
        Heyi~Lin,
        Jinjing~Zhu,
        Yang~Shi,
        Dongming Wu,
        Jianfeng Dong,
        Daizong~Liu,
        Henghui~Ding,
        and~Hui~Xiong,~\IEEEmembership{Fellow,~IEEE}%
\thanks{Runwei Guan, Shaofeng Liang, Heyi Lin, and Hui Xiong are with The Hong Kong University of Science and Technology (Guangzhou), Guangzhou, China. Hui Xiong is also with The Hong Kong University of Science and Technology, Hong Kong SAR, China. Corresponding author: xionghui@hkust-gz.edu.cn.}%
\thanks{Rongsheng Hu is with Jiangnan University, Wuxi, China.}%
\thanks{Shangshu Chen is with Central China Normal University, Wuhan, China.}%
\thanks{Ningwei Ouyang is with University of Liverpool, Liverpool, UK.}%
\thanks{Jinjing Zhu and Dongming Wu are with CUHK, Hong Kong SAR, China.}%
\thanks{Yang Shi is with Peking University, Beijing, China.}%
\thanks{Jianfeng Dong is with Zhejiang Gongshang University, Hangzhou, China. }
\thanks{Daizong Liu is with Wuhan University, Wuhan, China.}%
\thanks{Henghui Ding is with Fudan University, Shanghai, China.}%
}

\markboth{}
{Guan \MakeLowercase{\textit{et al.}}: Answer with Evidence: Consistency-Aware Grounded VQA for Roadside Traffic Scenes}

\maketitle

\begin{abstract}
Roadside traffic reasoning requires every free-form textual claim to be grounded in visual evidence. Existing grounded multimodal large language models (MLLMs) frequently exhibit say-point mismatch, wherein the textual response contradicts the localized bounding boxes. Standard evaluation metrics that assess answers and bounding boxes independently leave this failure unpenalized. We attribute this mismatch to the conventional answer-then-ground factorization, which commits to a numerical claim prior to enumerating target objects. To quantify this issue, we construct RoadSceneVQA-G, a benchmark comprising 34.7K question-answer pairs where each free-form answer is paired with its supporting box set, and propose the Answer-Grounding Consistency (AGC) evaluation suite. To resolve it, we introduce Enumerate-then-Answer (EtA), which reverses the decoding order so that answer-evidence alignment is inherently enforced by the output structure, along with Enumeration-Consistent Policy Optimization (ECPO), a reinforcement learning framework that leverages the union of multiple rollouts as a recall teacher without requiring ground-truth boxes. EtA boosts say-point consistency from 26.6\% to 93.7\% and grounding F1 from 52.2\% to 73.0\%, while ECPO further elevates F1 to 75.6\% without per-box supervision. On gRefCOCO, our framework outperforms leading baselines, demonstrating strong transferability beyond traffic scenes. The project is available at \url{https://github.com/GuanRunwei/RoadSceneVQA-G}.
\end{abstract}

\begin{IEEEkeywords}
Visual question answering, visual grounding, multimodal large language models, reinforcement learning, intelligent transportation systems.
\end{IEEEkeywords}

\section{Introduction}
\label{sec:intro}

\IEEEPARstart{I}{ntelligent} transportation systems are moving from passive perception toward interactive traffic management~\cite{guan2026roadscenevqa,zhou2025tumtraffic,zhou2024vision}. Operators query high-angle roadside cameras in natural language, and the answers inform downstream physical decisions~\cite{ye2022rope3d,yu2022dair}. Grounded multimodal large language models (MLLMs) fit this setting because they return a free-form answer together with the bounding boxes that support it~\cite{guan2026roadscenevqa,sima2024drivelm,chen2025automated}. Hallucination and weak grounding, however, still limit the reliability of vision-language models in driving scenes~\cite{liu2026rar,fu2025boosting}, whereas safety-critical traffic management requires every assertion to be backed by verifiable visual evidence.

\begin{figure}[!t]
\centering
\includegraphics[width=\columnwidth]{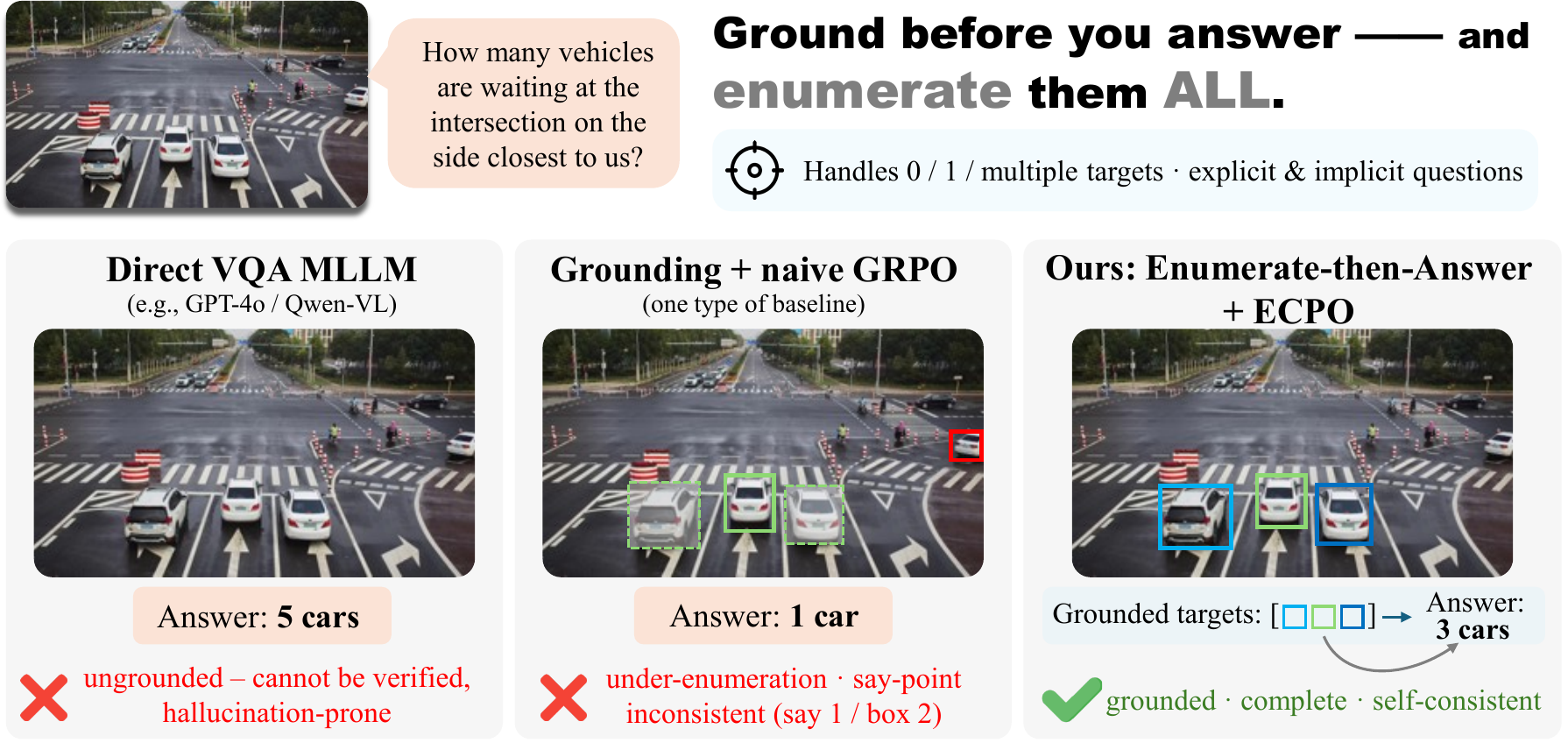}
\vspace{-5mm}
\caption{Grounded VQA in a roadside scene. A direct VQA MLLM answers without evidence. A grounding model trained with naive GRPO states one car but localizes two (say-point mismatch). EtA with ECPO first localizes all evidence, then reads the answer from it.}
\label{fig:teaser}
\end{figure}

To address this issue, we identify a key failure mode in grounded models termed \emph{say-point mismatch}, where textual answers contradict localized bounding boxes. As shown in Fig.\ref{fig:teaser}, a model claims one vehicle is waiting while localizing two. Standard metrics evaluate text~\cite{vedantam2015cider} and bounding boxes~\cite{yu2016modeling} independently~\cite{rohrbach2018object,li2023evaluating}, leaving this contradiction unpenalized. We attribute this to the answer-then-ground factorization, which samples count tokens before enumerating objects and thus commits to a numerical claim prior to producing spatial evidence. Furthermore, roadside queries require diverse grounding types that conventional paradigms fail to unify in a single inference step.

\begin{table*}[!t]
\centering
\caption{Comparison of RoadSceneVQA-G with existing benchmarks. \textbf{Open A.}: free-form textual answers. \textbf{Grd.}: grounding annotations tied to the answer. \textbf{Multi.}: multi-object boxes per QA pair. \textbf{$\emptyset$-grd.}: explicit no-ground cases. \textbf{Consist.}: answer-grounding consistency is directly measurable.}
\label{tab:dataset_comparison}
\vspace{-3mm}
\setlength{\tabcolsep}{10pt}
\begin{tabular}{llcccccccc}
\toprule
\textbf{Datasets} & \textbf{Venues} & \textbf{Domains} & \textbf{\# Pairs} & \textbf{Open A.} & \textbf{Grd.} & \textbf{Multi.} & \textbf{$\emptyset$-grd.} & \textbf{Consist.} \\
\midrule
\multicolumn{9}{l}{\textit{General VQA and visual grounding benchmarks}} \\
GQA~\cite{hudson2019gqa} & CVPR 2019 & General scenes & $\sim$22M & \xmark & \cmark & \cmark & \xmark & \xmark \\
RefCOCO+~\cite{yu2016modeling} & ECCV 2016 & General objects & $\sim$20K & \xmark & \cmark & \xmark & \xmark & \xmark \\
gRefCOCO~\cite{ding2026grex} & IJCV 2026 & General objects & $\sim$260K & \xmark & \cmark & \cmark & \cmark & \xmark \\
\midrule
\multicolumn{9}{l}{\textit{Traffic-scene VQA benchmarks}} \\
DriveBench~\cite{xie2025vlms} & ICCV 2025 & Autonomous driving & $\sim$60K & \cmark & \xmark & \xmark & \xmark & \xmark \\
TUMTraffic-VideoQA~\cite{zhou2025tumtraffic} & ICML 2025 & Roadside traffic & -- & \cmark & \xmark & \xmark & \xmark & \xmark \\
RoadSceneVQA~\cite{guan2026roadscenevqa} & AAAI 2026 & Roadside traffic & 34.7K & \cmark & \xmark & \xmark & \xmark & \xmark \\
\rowcolor{gray!30} \textbf{RoadSceneVQA-G (ours)} & -- & Roadside traffic & 34.7K & \cmark & \cmark & \cmark & \cmark & \cmark \\
\bottomrule
\end{tabular}
\end{table*}

Systematically investigating this failure requires diagnostic data that existing benchmarks fail to provide. Traffic visual question answering (VQA) datasets~\cite{qian2024nuscenes,sima2024drivelm,zhou2025tumtraffic,guan2026roadscenevqa} evaluate free-form text without spatial annotations, whereas visual grounding benchmarks restrict outputs to predefined targets without open-ended answers. We therefore build RoadSceneVQA-G on RoadSceneVQA imagery~\cite{guan2026roadscenevqa}, where every free-form answer is aligned with its witnessing box set of zero, one, or more instances.

To resolve these limitations, we address say-point mismatch at its structural source. \textbf{Enumerate-then-Answer (EtA)} inverts the decoding order and conditions the answer on an explicitly localized box sequence, so that a stated count becomes a deterministic readout of the evidence and queries without targets are answered with an empty box set. \textbf{Enumeration-Consistent Policy Optimization (ECPO)} further reduces under-enumeration in dense scenes by pooling the boxes of multiple sampled rollouts into a consensus set that serves as a label-free recall teacher.

Our main contributions are structured as follows:
\begin{itemize}
\item \textbf{A consistency-aware grounded VQA benchmark:} We formalize say-point mismatch and present RoadSceneVQA-G, 34.7K open-ended roadside questions whose answers are paired with their witnessing box sets, with the Answer-Grounding Consistency suite that measures answer-evidence agreement.
\item \textbf{A consistent output structure:} EtA makes answer-evidence consistency a structural property of the output serialization, which we characterize theoretically against the answer-then-ground factorization.
\item \textbf{Label-free recall optimization:} ECPO recovers omitted objects without per-box annotations. Experiments on our proposed RoadSceneVQA-G, gRefCOCO \cite{ding2026grex}, and CARPK \cite{hsieh2017drone} show consistent gains in consistency, grounding quality, and joint correctness.
\end{itemize}

In the remainder, Section~\ref{sec:related} reviews related work, Section~\ref{sec:dataset} presents RoadSceneVQA-G and the AGC suite, Section~\ref{sec:method} details EtA and ECPO, Section~\ref{sec:experiments} reports experiments, and Section~\ref{sec:conclusion} concludes the paper.

\section{Related Work}
\label{sec:related}

\subsection{Grounded Multimodal Large Language Models}
Referring expression comprehension localizes image regions described by natural language~\cite{suo2022rethinking}. Grounded MLLMs extend open-ended VQA by emitting boxes as coordinate tokens or structured text, as established by Shikra~\cite{chen2023shikra}, Kosmos-2~\cite{peng2024grounding}, Ferret~\cite{you2024ferret}, and the Qwen-VL family~\cite{bai2025qwen2,bai2025qwen3,bai2023qwenvlversatilevisionlanguagemodel}. Grounding is usually applied to every query, where boxes form an output channel largely independent of the answer. Adaptive variants rely on keyword rules or an auxiliary router~\cite{wu2025routing}. Another line couples grounding with reasoning. GRIT~\cite{fan2026grit} interleaves boxes within the reasoning chain, Woodpecker~\cite{yin2024woodpecker} verifies responses post hoc with an external detector, V$^{*}$\cite{wu2024v} iteratively crops and re-queries the image, CogCoM\cite{cogcom2024} grounds, counts, and crops before answering, and VisCoT~\cite{shao2024visual} annotates visual evidence behind each reasoning step. Explanatory VQA further justifies answers with multimodal explanations~\cite{xie2024knowledge}. In these methods, boxes serve as operands of an intermediate trace rather than witnesses of the final answer. None of them requires the stated count to match localized instances, leaving answer-grounding consistency unmeasured. Counting methods~\cite{ranjan2021learning,jiang2023clip,du2023redesigning} estimate counts from density maps or point annotations but neither answer open questions nor expose per-instance boxes. EtA instead derives the answer from the enumerated box set.

\begin{figure*}[!t]
\centering
\includegraphics[width=\textwidth]{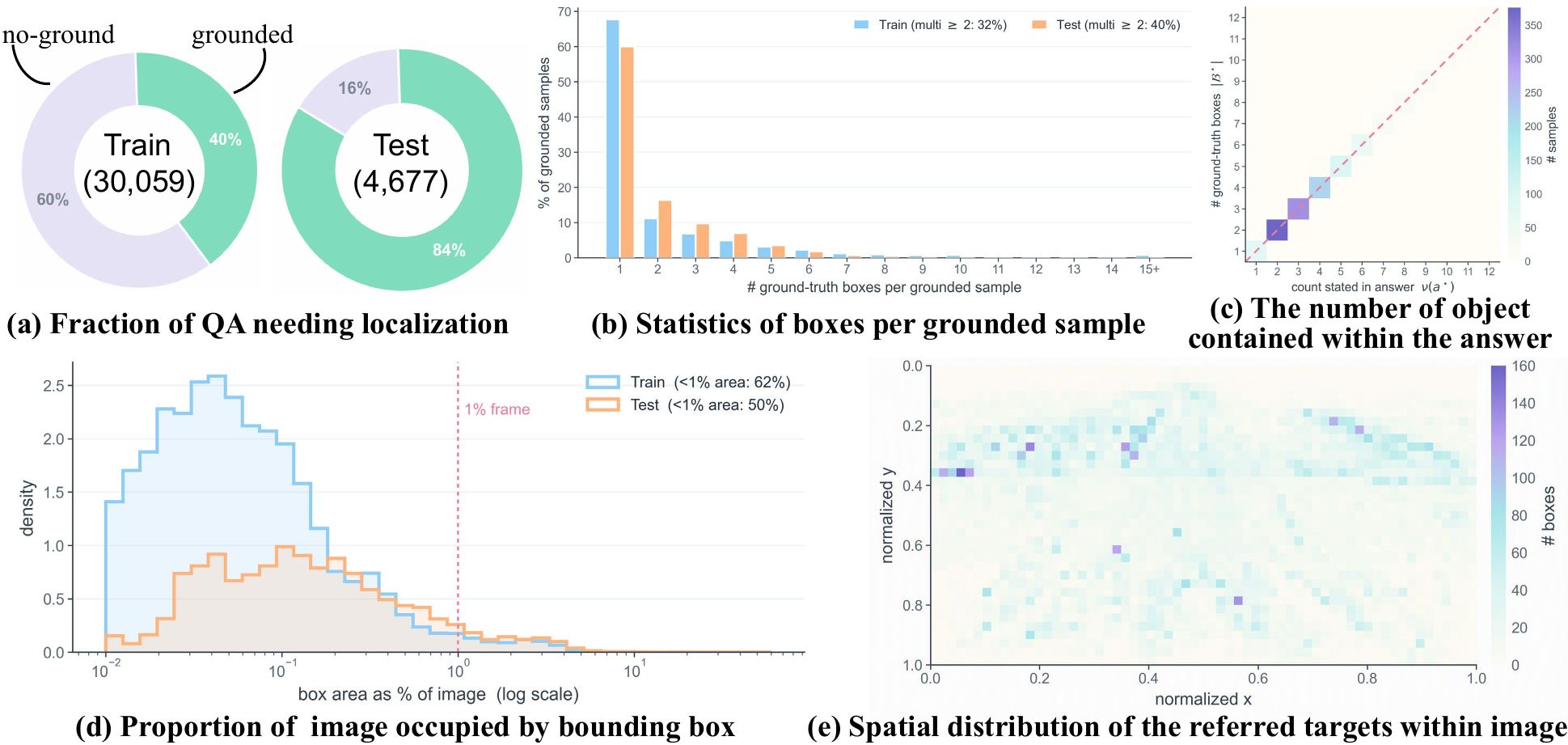}
\vspace{-5mm}
\caption{Statistics of RoadSceneVQA-G. (a) Share of questions that require localization in each split. (b) Distribution of the number of ground-truth boxes per grounded question. (c) Count stated in the answer versus the number of ground-truth evidence boxes. (d) Box area as a percentage of the image area (log scale). (e) Spatial distribution of the referred targets over normalized image coordinates.}
\label{fig:dataset_1}
\end{figure*}

\subsection{Reinforcement Learning for Visual Grounding}
Following group-relative policy optimization (GRPO) in language reasoning~\cite{shao2024deepseekmath,guo2025deepseek}, recent work applies rule-based rewards to visual grounding. UniVG-R1~\cite{bai2025univg} uses an mIoU reward, VLM-R1~\cite{shen2025vlm} designs task-specific rewards, DIP-R1~\cite{park2025dip} adds step-wise counting and IoU rewards, SpatialThinker~\cite{batra2025spatialthinker} combines count, grounding, and answer rewards, and Ground-R1 and SRPO~\cite{cao2025ground,wan2026srpo} re-weight credit by region scale. Self-consistency decoding~\cite{wangself} and consensus-filtered or self-rewarded policy optimization~\cite{chen2026grpo,zuo2026ttrl} aggregate multiple samples, typically by majority vote over a discrete answer. A vote selects one answer, whereas recovering under-enumerated objects requires completing a set. ECPO therefore takes the union of the box sets of multiple rollouts and uses the well-supported clusters as a recall teacher without ground truth.

\subsection{Visual Question Answering in Traffic Scenes}
Language-driven understanding of driving scenes has been studied in several forms. Referring benchmarks such as Talk2Car~\cite{deruyttere2019talk2car} and Talk2Radar~\cite{guan2024talk2radar} localize the object described by a command or an expression, and NuPrompt~\cite{wu2025language} extends language prompts to multiple objects tracked across frames. These tasks output boxes but no free-form answer. Traffic VQA has progressed from vehicle-centric benchmarks, including NuScenes-QA~\cite{qian2024nuscenes}, DriveLM~\cite{sima2024drivelm}, and CODA-LM~\cite{chen2025automated}, to roadside settings such as TUMTraffic-VideoQA~\cite{zhou2025tumtraffic} and RoadSceneVQA~\cite{guan2026roadscenevqa}. NuScenes-QA builds questions from scene graphs, DriveLM links perception, prediction, and planning questions in a graph, CODA-LM targets corner cases, and MPDrive~\cite{zhang2025mpdrive} expresses coordinates through visual markers to improve spatial understanding. As summarized in Table~\ref{tab:dataset_comparison}, these VQA benchmarks are evaluated mainly with text-generation metrics, and grounding annotations, when present, are not tied to the answer. DriveBench~\cite{xie2025vlms} further shows that vision-language models can produce plausible driving answers from textual cues rather than visual evidence, a failure that text metrics cannot detect. We therefore formulate roadside VQA as grounded enumeration and add the AGC consistency axis to standard answer-quality metrics, so that an answer is credited only when its evidence supports it.

\section{RoadSceneVQA-G Dataset}
\label{sec:dataset}

RoadSceneVQA-G builds on the roadside imagery and object annotations of Rope3D~\cite{ye2022rope3d} alongside the question coverage of RoadSceneVQA~\cite{guan2026roadscenevqa}. Each sample pairs a traffic image and an open-ended question with a free-form answer and its witnessing box set. Because this set can contain multiple boxes, a single box, or none, the grounding decision, enumeration completeness, and answer-evidence alignment all become directly observable (Table~\ref{tab:dataset_comparison}).

\subsection{Dataset Construction}
\label{sec:construction}

Tight object boxes already exist in the source data, so the annotation focuses on authoring questions and binding each answer to its supporting boxes (Fig.~\ref{A-fig:annotation_tools} in Appendix~\ref{A-app:dataset}). \textit{1) Candidate QA generation:} GPT-5~\cite{openai2025gpt5} drafts question-answer pairs on existence, counting, identification, and selection for each frame. These drafts are never used as ground truth. \textit{2) Evidence loading:} The existing boxes and category labels are overlaid on the image as a selectable candidate pool. \textit{3) Grounded annotation:} The annotator corrects each question and answer and selects the subset of boxes, if any, that constitutes the evidence. Questions that refer to no object are kept as no-ground cases, which teach the existence decision and penalize hallucinated boxes. \textit{4) Consensus review:} Two reviewers independently check box tightness, category correctness, question naturalness, and answer-evidence agreement, and require every numeric answer to equal the cardinality of its box set. Only items that pass both reviews are admitted. Six annotators, who passed a qualification test and a training round of 100 seeded examples, perform annotation and review.

\textit{Annotation quality:} Two adjudicators who did not take part in the annotation re-annotate a stratified random sample of 5,000 items, blind to the released labels and to each other. After reconciliation, the released labels are confirmed for 98.6\% of the answers and 97.8\% of the evidence sets, and count-box agreement holds for 99.2\% of the counting items. Inter-adjudicator agreement before reconciliation reaches a Cohen's $\kappa$ of 0.980. An additional MLLM audit and its calibration are reported in Appendix~\ref{A-app:dataset}.

\begin{table}[!t]
\centering
\caption{Composition of the RoadSceneVQA-G splits. \textbf{No-Grd.}: questions without evidence boxes. \textbf{Grd.}: questions with at least one evidence box. \textbf{Multi-Box}: share of grounded questions with at least two boxes. \textbf{Avg.\ Box}: mean number of evidence boxes per grounded question.}
\label{tab:ds-stats}
\vspace{-3mm}
\setlength{\tabcolsep}{3pt}
\begin{tabular*}{\columnwidth}{@{\extracolsep{\fill}}lccccc@{}}
\toprule
\textbf{Split} & \textbf{Items} & \textbf{No-Grd.} & \textbf{Grd.} & \textbf{Multi-Box} & \textbf{Avg. Box} \\
\midrule
Train & 30{,}059 & 59.6\% & 40.4\% & 32.4\% & 2.06 \\
Test  & 4{,}677  & 15.8\% & 84.2\% & 40.1\% & 1.98 \\
\bottomrule
\end{tabular*}

\end{table}

\begin{table}[!t]
\centering
\caption{Capability taxonomy of RoadSceneVQA-G.}
\label{tab:qtype}
\vspace{-3mm}
\setlength{\tabcolsep}{3pt}
\begin{tabular*}{\columnwidth}{@{\extracolsep{\fill}}llcc@{}}
\toprule
\textbf{Tier} & \textbf{Category} & \textbf{Train} & \textbf{Test} \\
\midrule
\multirow{5}{*}{Perception}
 & Existence              & 24.7 & 11.0 \\
 & Enumeration            & 8.7  & 27.8 \\
 & Attribute recognition  & 41.5 & 31.0 \\
 & Object identification  & 2.4  & 9.3  \\
 & Scene inference        & 14.5 & 3.9  \\
\midrule
\multirow{4}{*}{Reasoning}
 & Spatial reasoning      & 4.1  & 9.8  \\
 & Behavioral reasoning   & 1.2  & 5.3  \\
 & Normative reasoning    & 2.8  & 1.6  \\
 & Causal reasoning       & 0.1  & 0.2  \\
\midrule
\multicolumn{2}{l}{Reasoning share} & 8.2 & 16.9 \\
\bottomrule
\end{tabular*}
\end{table}

\subsection{Dataset Statistics}
\label{sec:statistics}

\textit{Scale and grounding:} RoadSceneVQA-G contains 34,736 QA pairs, 30,059 for training and 4,677 for testing, with essentially one reviewed question per frame (Table~\ref{tab:ds-stats}). Grounding is required for 40.4\% of the training and 84.2\% of the test questions, and 32.4\% and 40.1\% of these require multi-object localization. The 32,831 evidence boxes are small and spread across the scene (Fig.~\ref{fig:dataset_1}), with a median image-area coverage of 0.46\% (train) and 1.00\% (test). The number of boxes per question reaches 55, and 1.9\% of the questions require enumerating seven or more objects.

\textit{Train-to-test shift:} The test split shifts toward the categories in which answer-evidence consistency is most demanding (Table~\ref{tab:qtype}). Enumeration rises from 8.7\% to 27.8\%, the share of reasoning questions doubles from 8.2\% to 16.9\%, and numeric answers rise from 4.4\% to 24.7\%. A model that learns to under-enumerate on the single-object-heavy training split is therefore penalized sharply at test time.

\vspace{-2mm}
\subsection{Evaluation Metrics}
\label{sec:metrics}

The Answer-Grounding Consistency (AGC) suite evaluates the grounding decision, the enumeration, the answer-evidence agreement, and their joint correctness. For each of the $N$ test samples, a model receives an image $I$ and a question $q$ and produces an answer $\hat{a}$ with a possibly empty box set $\hat{B}=\{\hat{b}_1,\dots,\hat{b}_m\}$. The ground truth is $(a^\star,B^\star)$ with $B^\star=\{b^\star_1,\dots,b^\star_n\}$, and $\nu(a)\in\mathbb{N}\cup\{\varnothing\}$ denotes the integer count stated in answer $a$ ($\varnothing$ if none).

\textit{Grounding-decision accuracy (GDA)} scores whether the model decides correctly to localize at all, and \textit{count accuracy (CntAcc)} checks the cardinality of the enumerated set:
\begin{equation}
\begin{aligned}
\mathrm{GDA}&=\frac{1}{N}\sum_{i=1}^{N}\mathbf{1}\!\left[\mathbf{1}[\hat{B}_i\neq\varnothing]=\mathbf{1}[B^\star_i\neq\varnothing]\right],\\
\mathrm{CntAcc}&=\frac{1}{N}\sum_{i=1}^{N}\mathbf{1}\!\left[|\hat{B}_i|=|B^\star_i|\right].
\end{aligned}
\label{eq:gda}
\end{equation}

\textit{Say-point consistency (SPC)}, the central metric, measures whether the model says what it points to. It builds on the say-point mismatch indicator
\begin{equation}
\mathrm{SPM}(\hat{a},\hat{B})=\mathbf{1}\!\left[\nu(\hat{a})\neq\varnothing\;\wedge\;\nu(\hat{a})\neq|\hat{B}|\right],
\label{eq:spm}
\end{equation}
and is its complement over the set $\mathcal{C}=\{i\mid\nu(\hat{a}_i)\neq\varnothing\}$ of predictions that state a count:
\begin{equation}
\mathrm{SPC}=\frac{1}{|\mathcal{C}|}\sum_{i\in\mathcal{C}}\mathbf{1}\!\left[\nu(\hat{a}_i)=|\hat{B}_i|\right].
\label{eq:spc}
\end{equation}
SPC measures agreement between the explicitly stated integer count and the number of predicted boxes. It is computed over predictions with an extractable count, with the number of eligible predictions reported alongside the score.

\textit{Grounded-count accuracy (GCA)} credits a prediction only when the answer and the evidence are both correct:
\begin{equation}
\mathrm{GCA}=\frac{1}{N}\sum_{i=1}^{N}\mathbf{1}\!\left[\mathrm{ans\_ok}(\hat{a}_i,a^\star_i)\wedge F1_{0.5}(\hat{B}_i,B^\star_i)\ge 0.5\right],
\label{eq:gca}
\end{equation}
where $\mathrm{ans\_ok}$ is a lexical answer-matching predicate with a lenient (GCA-l) and a strict (GCA-s) variant (Appendix~\ref{A-app:metrics}). Because both variants are lexical, GCA is a conservative lower bound on joint correctness.

\textit{Grounding quality:} Predicted and ground-truth boxes are matched by a max-IoU Hungarian assignment~\cite{yu2016modeling,shen2025vlm,bai2025univg}. A matched pair is correct when its IoU is at least 0.5. We report precision, recall, and F1 under this matching, the COCO-style AP$_{50:95}$~\cite{lin2014microsoft}, and the recall on samples with seven or more ground-truth objects (R@7+).

\begin{figure*}[!t]
\centering
\includegraphics[width=\textwidth]{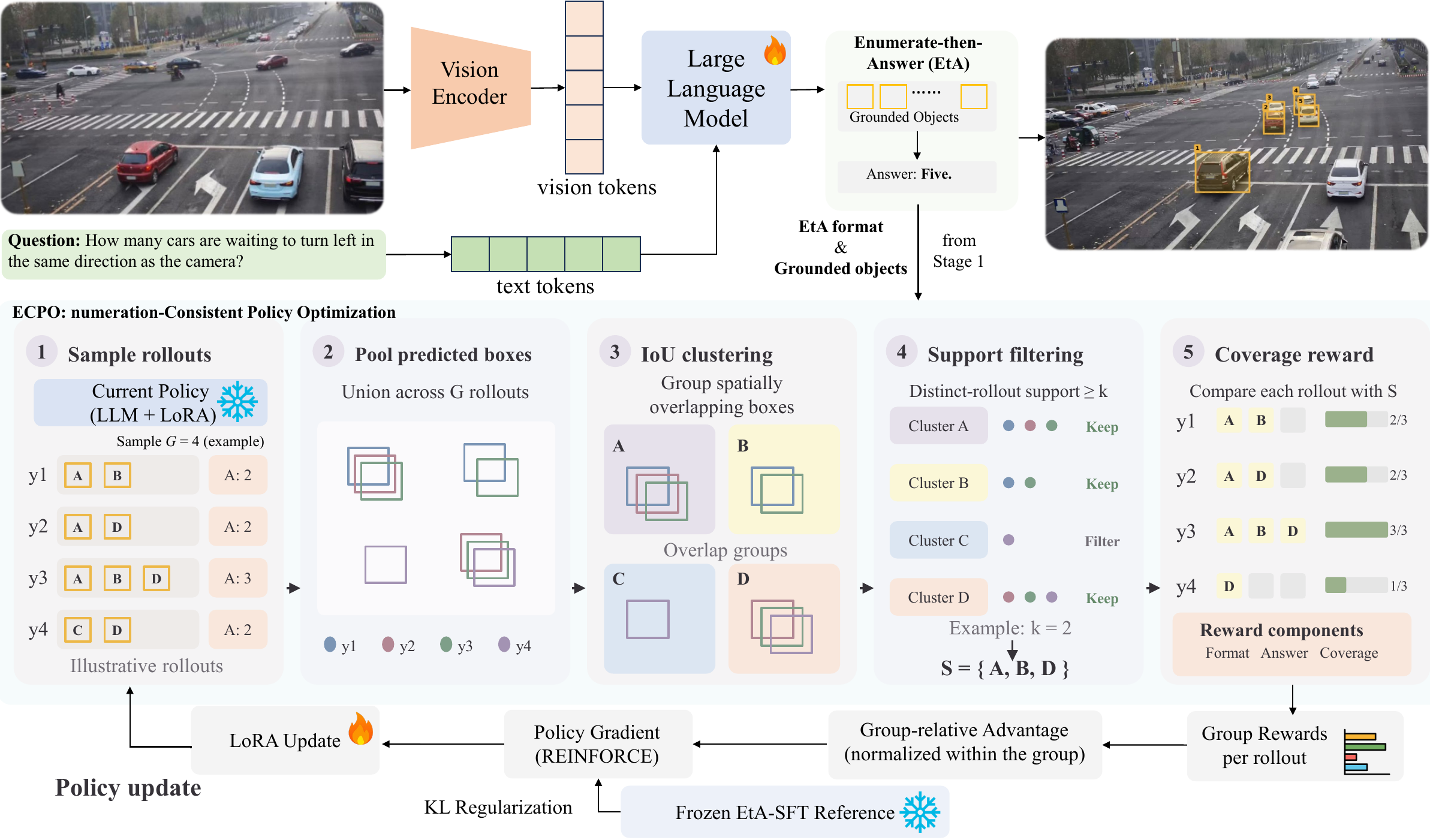}
\vspace{-5mm}
\caption{Overview of the proposed framework. \textbf{Stage~1 (top): Enumerate-then-Answer (EtA)} serializes the grounded objects before the answer, so the answer is read out from the enumerated evidence and an empty box set encodes the decision not to ground. \textbf{Stage~2 (bottom): Enumeration-Consistent Policy Optimization (ECPO)} samples $G$ rollouts from the current policy, pools and clusters their predicted boxes by IoU, keeps the clusters supported by at least $k$ distinct rollouts as the consensus set, and rewards each rollout by its coverage of this set. Group-relative advantages update the policy with KL regularization toward the frozen EtA-SFT reference.}
\label{fig:main_figure}
\end{figure*}

\vspace{-3mm}
\section{Method}
\label{sec:method}

Our framework improves answer-grounding alignment and enumeration completeness through two stages (Fig.~\ref{fig:main_figure}). EtA generates the answer conditioned on the predicted boxes (Section~\ref{sec:eta}), while ECPO uses multi-rollout consensus to improve object coverage without per-box supervision (Section~\ref{sec:ecpo}).

\subsection{Enumerate-then-Answer}
\label{sec:eta}

\textit{Factorization:} Given an image $I$ and a query $q$, a grounded MLLM models the joint distribution of the answer $\hat{a}$ and the evidence set $\hat{B}$, and existing models factorize it in the answer-then-ground order:
\begin{equation}
p(\hat{a},\hat{B}\mid I,q)=p(\hat{a}\mid I,q)\,p(\hat{B}\mid I,q,\hat{a}).
\label{eq:atg}
\end{equation}
The answer tokens, including any count $\nu(\hat{a})$, are sampled before localization, so the model can state quantities that no visual evidence supports. EtA inverts the order of the conditional factors:
\begin{equation}
p_\theta(\hat{a},\hat{B}\mid I,q)=p_\theta(\hat{B}\mid I,q)\,p_\theta(\hat{a}\mid I,q,\hat{B}).
\label{eq:eta}
\end{equation}

\textit{Serialization:} Both factors are parameterized by one autoregressive decoder over a unified token sequence $y=[\,\hat{b}_1,\dots,\hat{b}_m,\hat{a}\,]$. Each box $\hat{b}_j=(x_1,y_1,x_2,y_2)$ is written as integer coordinates normalized to $[0,1000]$. Boxes are sorted by their top-left corner $(y_1,x_1)$, which gives a canonical order and stabilizes decoding in cluttered scenes. An empty list encodes $\hat{B}=\varnothing$, the explicit decision not to ground, so grounded and non-grounded questions share one output format without a routing module. The decoder is trained with the cross-entropy loss:
\begin{equation}
\mathcal{L}_{\mathrm{SFT}}(\theta)=-\sum_{t=1}^{|y|}\log p_\theta\bigl(y_t\mid y_{<t},I,q\bigr).
\label{eq:sft}
\end{equation}

Since the answer tokens follow all box tokens, the stated count is conditioned on the complete enumerated set.

\textit{Theoretical analysis:} We characterize the representation capacity of the two factorizations with two properties of a policy on an input $(I,q)$. \emph{Grounding correctness} holds when $\hat{B}=B^\star$, and \emph{consistency} holds when $\mathrm{SPM}(\hat{a},\hat{B})=0$ in Eq. \ref{eq:spm}.

\begin{proposition}[Representability and trade-off]
\label{prop:eta}
(i) Under the EtA factorization (Eq. \ref{eq:eta}), there exists a policy that is deterministically consistent and grounding-correct on every input.
(ii) Under the answer-then-ground factorization (Eq. \ref{eq:atg}), every policy satisfies, for every $(I,q)$ with $n=|B^\star|$,
\begin{multline}
\Pr[\mathrm{SPM}=1\mid I,q]\;\ge\;1-\Pr[\nu(\hat{a})=\varnothing\mid I,q]\\
-\Pr[\nu(\hat{a})=n\mid I,q]-\Pr[|\hat{B}|\neq n\mid I,q].
\label{eq:prop1}
\end{multline}
A policy that enumerates the correct cardinality and always states a count therefore satisfies $\Pr[\mathrm{SPM}=1\mid I,q]=\Pr[\nu(\hat{a})\neq n\mid I,q]$.
\end{proposition}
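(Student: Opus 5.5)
Part (i) is proved by an explicit construction inside the EtA factorization (Eq.~\ref{eq:eta}); part (ii) is an event-inclusion argument followed by a union bound.

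\textbf{Part (i).} I would define a deterministic policy whose first factor puts all mass on the ground truth, $p_\theta(\hat{B}\mid I,q)=\mathbf{1}[\hat{B}=B^\star]$. Its second factor is a point mass on an answer string $a(\hat{B})$ that depends on the conditioning box set. One admissible choice is any answer with $\nu(a(\hat{B}))=|\hat{B}|$, for instance $a^\star$ itself, since the dataset enforces $\nu(a^\star)=|B^\star|$ whenever a count is stated. An answer stating no count also works. Grounding correctness holds by construction. Consistency follows because either $\nu(\hat{a})=\varnothing$ or $\nu(\hat{a})=|\hat{B}|$, so $\mathrm{SPM}=0$ with probability one.

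The only point needing care is that this is a legitimate member of the EtA family. The answer factor conditions on $\hat{B}$, so any function of $\hat{B}$ is allowed, and a single autoregressive decoder over $y=[\hat{b}_1,\dots,\hat{b}_m,\hat{a}]$ can realize it. I would note that representability is the claim, not learnability.

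\textbf{Part (ii).} I would work pointwise for fixed $(I,q)$ and use only the event structure, so the result holds for every joint distribution and in particular for those of the form (Eq.~\ref{eq:atg}). The key inclusion is
\[
\{\nu(\hat{a})\neq\varnothing\}\cap\{\nu(\hat{a})\neq n\}\cap\{|\hat{B}|=n\}\subseteq\{\mathrm{SPM}=1\}.
\]
It holds because on this intersection a count is stated and it differs from $n=|\hat{B}|$.

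Taking complements and applying the union bound gives
\[
\Pr[\mathrm{SPM}=0]\le\Pr[\nu(\hat{a})=\varnothing]+\Pr[\nu(\hat{a})=n]+\Pr[|\hat{B}|\neq n],
\]
which is exactly (Eq.~\ref{eq:prop1}).

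For the corollary, assume $\Pr[\nu(\hat{a})=\varnothing]=0$ and $\Pr[|\hat{B}|=n]=1$. Then, up to null sets, $\mathrm{SPM}=1$ if and only if $\nu(\hat{a})\neq n$, which gives the stated equality rather than merely the inequality.

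\textbf{Main obstacle.} The main difficulty is interpretive rather than technical: the bound as written holds for any joint distribution, not specifically because of the answer-then-ground factorization. To make part (ii) meaningful I would add a remark explaining why the right-hand side is large under that factorization. In Eq.~\ref{eq:atg} the count $\nu(\hat{a})$ is drawn from $p(\hat{a}\mid I,q)$ before $\hat{B}$ is produced. Agreement therefore requires the answer marginal itself to concentrate on $n$, rather than being enforced by conditioning on the realized evidence as in part (i).
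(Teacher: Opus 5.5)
Your proof is correct and is essentially the paper's: (i) uses the same point mass on $B^\star$ followed by the readout $\nu(\hat{a})=|\hat{B}|$, and (ii) establishes the same inclusion $\{\mathrm{SPM}=0\}\subseteq\{\nu(\hat{a})=\varnothing\}\cup\{\nu(\hat{a})=n\}\cup\{|\hat{B}|\neq n\}$, then applies the union bound and derives the same equality case (you reach the inclusion by complementing a three-event intersection, the paper by an exact split of $\{\mathrm{SPM}=0\}$), and your closing remark on why the right-hand side is large under answer-then-ground decoding mirrors the paper's post-proof discussion. One small fix: drop $a^\star$ as an example answer in (i), because the claim ranges over every input and the paper's appendix reports that reference answers reach only about 92\% SPC under the exact-match $\nu$ rule, so the explicit readout is the choice that actually guarantees $\mathrm{SPM}=0$.
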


\begin{proof}
(i) Let the enumeration factor place $\hat{B}=B^\star$ and let the answer factor implement the deterministic readout $\nu(\hat{a})=|\hat{B}|$. The policy is grounding-correct by construction, and $\nu(\hat{a})=|\hat{B}|$ gives $\mathrm{SPM}(\hat{a},\hat{B})=0$ on every input.

(ii) Conditioning on $(I,q)$, $\mathrm{SPM}=0$ holds exactly when no count is stated or the stated count equals $|\hat{B}|$, so $\Pr[\mathrm{SPM}=0]=\Pr[\nu(\hat{a})=\varnothing]+\Pr[\nu(\hat{a})\neq\varnothing\wedge\nu(\hat{a})=|\hat{B}|]$. If a stated count equals $|\hat{B}|$ but differs from $n$, then $|\hat{B}|\neq n$. Hence $\{\nu(\hat{a})\neq\varnothing\wedge\nu(\hat{a})=|\hat{B}|\}\subseteq\{\nu(\hat{a})=n\}\cup\{|\hat{B}|\neq n\}$, and the union bound followed by taking complements yields (Eq. \ref{eq:prop1}). If the policy always states a count and $|\hat{B}|=n$, then $\mathrm{SPM}=0$ is equivalent to $\nu(\hat{a})=n$, and equality holds.
\end{proof}

Under Eq. \ref{eq:atg}, the answer is sampled from $p(\hat{a}\mid I,q)$, which does not observe $\hat{B}$. The term $\Pr[\nu(\hat{a})=n\mid I,q]$ is therefore the accuracy of counting without localization. A strong answer-then-ground model could tighten the bound by learning to count directly from the image, but it must do so without access to its own boxes, which couples consistency to an auxiliary counting problem. EtA removes this coupling by construction and makes answer-evidence agreement the structural default instead of a learned behavior.

\subsection{Enumeration-Consistent Policy Optimization}
\label{sec:ecpo}

EtA removes the structural source of say-point mismatch, but it does not guarantee that the enumerated set is complete, and omissions are most likely in dense scenes with many small objects. Plain GRPO~\cite{shao2024deepseekmath} could not correct this reliably. When all rollouts omit the same objects, a reward computed per rollout has no variance on these objects and the gradient carries no signal for them. Under-enumeration errors across rollouts are, however, only partially correlated, so the union of their box sets recovers more true targets than any single sample. ECPO turns this coverage into pseudo-labels.

\textit{Consensus set construction:} Given a prompt $(I,q)$, we sample $G$ rollouts $\{y^{(g)}\}_{g=1}^{G}\sim\pi_\theta$ and pool all predicted boxes into $\mathcal{U}=\bigcup_{g}\hat{B}^{(g)}$. The boxes are clustered greedily, and a box $b$ joins cluster $c$ when $\mathrm{IoU}(b,\bar{c})\ge 0.5$, where $\bar{c}$ is the representative box of $c$. Let $\mathrm{supp}(\bar{c})$ be the set of distinct rollouts that contribute a box to $c$. To suppress false positives, the consensus set keeps only the clusters supported by at least $k=\max(2,\lceil 0.25G\rceil)$ rollouts:
\begin{equation}
\tilde{S}=\{\bar{c}\mid|\mathrm{supp}(\bar{c})|\ge k\}.
\label{eq:consensus}
\end{equation}

\textit{Reward design:} The consensus set serves as a pseudo-recall teacher. Each rollout receives a coverage reward
\begin{equation}
r_{\mathrm{grp}}^{(g)}=\frac{|\mathcal{M}_{0.5}(\hat{B}^{(g)},\tilde{S})|}{|\tilde{S}|},
\label{eq:rgrp}
\end{equation}
where $\mathcal{M}_{0.5}$ denotes the Hungarian matching at an IoU of 0.5. A rollout that recovers an object found by other rollouts is rewarded even if the majority missed it, which supplies the signal that plain GRPO lacks. The total reward is
\begin{equation}
r^{(g)}=\lambda_{\mathrm{grp}}r_{\mathrm{grp}}^{(g)}+\lambda_{\mathrm{fmt}}r_{\mathrm{fmt}}^{(g)}+\lambda_{\mathrm{ans}}r_{\mathrm{ans}}^{(g)},
\label{eq:reward}
\end{equation}
where $r_{\mathrm{fmt}}$ is a binary check that the output parses into the EtA format and $r_{\mathrm{ans}}$ is the token-overlap agreement with the reference answer. No term uses ground-truth boxes.

\begin{algorithm}[!t]
\caption{One ECPO update}
\label{alg:ecpo}
\begin{algorithmic}[1]
\Require Policy $\pi_\theta$ initialized from EtA-SFT, frozen reference $\pi_{\mathrm{ref}}$, prompt batch $\mathcal{D}$, group size $G$, support threshold $k$, reward weights $\lambda_{\mathrm{grp}},\lambda_{\mathrm{fmt}},\lambda_{\mathrm{ans}}$, KL weight $\beta$
\For{each $(I,q,a^\star)\in\mathcal{D}$}
  \State Sample rollouts $y^{(1)},\dots,y^{(G)}\sim\pi_\theta(\cdot\mid I,q)$ and parse each into $(\hat{B}^{(g)},\hat{a}^{(g)})$
  \State $\mathcal{U}\gets\bigcup_{g}\hat{B}^{(g)}$, \ $\mathcal{K}\gets\varnothing$ \Comment{pool boxes}
  \For{each box $b\in\mathcal{U}$}
    \If{$\exists\, c\in\mathcal{K}$ with $\mathrm{IoU}(b,\bar{c})\ge 0.5$}
      \State add $b$ to $c$
    \Else
      \State $\mathcal{K}\gets\mathcal{K}\cup\{\{b\}\}$
    \EndIf
  \EndFor
  \State $\tilde{S}\gets\{\bar{c}\mid c\in\mathcal{K},\ |\mathrm{supp}(\bar{c})|\ge k\}$ \Comment{Eq.~\ref{eq:consensus}}
  \For{$g=1,\dots,G$}
    \State compute $r_{\mathrm{grp}}^{(g)}$ by Eq.\ref{eq:rgrp}, $r_{\mathrm{fmt}}^{(g)}$, and $r_{\mathrm{ans}}^{(g)}$ against $a^\star$
    \State $r^{(g)}\gets\lambda_{\mathrm{grp}}r_{\mathrm{grp}}^{(g)}+\lambda_{\mathrm{fmt}}r_{\mathrm{fmt}}^{(g)}+\lambda_{\mathrm{ans}}r_{\mathrm{ans}}^{(g)}$
  \EndFor
  \State $A^{(g)}\gets(r^{(g)}-\mu)/(\sigma+\epsilon)$ for all $g$ \Comment{normalize}
\EndFor
\State Update $\theta$ by a gradient step on $\mathcal{L}_{\mathrm{ECPO}}$ in Eq. \ref{eq:ecpo}
\end{algorithmic}
\end{algorithm}

\textit{Policy optimization:} Algorithm~\ref{alg:ecpo} summarizes one ECPO update. Rewards are normalized within the group as $A^{(g)}=(r^{(g)}-\mu)/(\sigma+\epsilon)$, where $\mu$ and $\sigma$ are the group mean and standard deviation. We minimize the group-relative loss with KL regularization toward the frozen EtA-SFT reference $\pi_{\mathrm{ref}}$:
\begin{multline}
\mathcal{L}_{\mathrm{ECPO}}(\theta)=-\mathbb{E}\Bigl[\frac{1}{|y^{(g)}|}\sum_{t}A^{(g)}\log\pi_\theta\bigl(y_t^{(g)}\mid y_{<t}^{(g)},I,q\bigr)\Bigr]\\
+\beta\,\mathbb{E}\bigl[\mathrm{KL}(\pi_\theta\,\|\,\pi_{\mathrm{ref}})\bigr].
\label{eq:ecpo}
\end{multline}

\begin{proposition}[Recall amplification]
\label{prop:ecpo}
Assume each target $b_j^\star\in B^\star$ is detected independently in each rollout with probability at least $p$. Then the union $\mathcal{U}$ of $G$ rollouts satisfies
\begin{equation}
\mathbb{E}[R(\mathcal{U})]\ge 1-(1-p)^G> p\quad\text{for } G\ge 2,\; p\in(0,1).
\end{equation}
\end{proposition}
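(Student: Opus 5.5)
The plan is to reduce the statement to a per-target computation and use linearity of expectation. Interpret $R(\mathcal{U})$ as the recall of the pooled set against the ground truth: $R(\mathcal{U})=\frac{1}{n}\sum_{j=1}^{n}\mathbf{1}[b_j^\star\text{ is detected by }\mathcal{U}]$, where $n=|B^\star|\ge 1$. Here ``detected'' means that some box in $\mathcal{U}$ matches $b_j^\star$ at IoU $\ge 0.5$, which is the same per-target event as in the hypothesis. Taking expectations gives
\begin{equation*}
\mathbb{E}[R(\mathcal{U})]=\frac{1}{n}\sum_{j=1}^{n}\Pr[b_j^\star\text{ detected by }\mathcal{U}].
\end{equation*}
It therefore suffices to lower-bound each summand by $1-(1-p)^G$. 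No independence across different targets is needed, only independence across rollouts for a fixed target.

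Fix $j$ and let $E_g$ be the event that rollout $g$ detects $b_j^\star$. Every box of $\hat{B}^{(g)}$ lies in $\mathcal{U}$, so $\bigcup_g E_g$ is contained in the event that $\mathcal{U}$ detects $b_j^\star$. The complement event $\bigcap_g E_g^c$ has probability $\prod_g(1-\Pr[E_g])$ by the assumed independence across rollouts. Since $\Pr[E_g]\ge p$, this product is at most $(1-p)^G$. Hence each summand is at least $1-(1-p)^G$, and averaging over $j$ gives the first inequality.

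For the strict inequality, write $q=1-p\in(0,1)$. We need $1-q^G>1-q$, that is, $q^G<q$. This holds because $q^{G-1}<1$ for $G\ge 2$ and $0<q<1$.

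The main obstacle is definitional rather than computational. One has to make precise that ``detected in a rollout'' and ``detected by $\mathcal{U}$'' refer to the same IoU criterion, so that the monotonicity step (union of rollouts $\subseteq$ pooled detection) is valid. One also has to state that $R$ here is recall of the raw union $\mathcal{U}$, before the consensus filtering of Eq.~\ref{eq:consensus}. The consensus set $\tilde{S}$ can only lower recall, so the proposition does not bound it directly.
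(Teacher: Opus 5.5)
Your proof is correct and follows the same route as the paper: a target is missed only if every rollout misses it, independence across rollouts bounds this by $(1-p)^G$, linearity of expectation over the $n$ targets gives $\mathbb{E}[R(\mathcal{U})]\ge 1-(1-p)^G$, and $(1-p)^G<1-p$ for $G\ge 2$ gives the strict inequality. You also fix a single IoU-based detection criterion for individual rollouts and for the raw union $\mathcal{U}$, taken before consensus filtering, so that the monotonicity step holds; the paper's proof leaves this point implicit.
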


\begin{proof}
The union misses $b_j^\star$ only when every rollout misses it, which happens with probability at most $(1-p)^G$. Recall is the fraction of recovered ground-truth objects, so linearity of expectation over the $n$ objects gives $\mathbb{E}[R(\mathcal{U})]\ge 1-(1-p)^G$. Since $(1-p)^G<1-p$ for $G\ge 2$, the bound exceeds $p$.
\end{proof}

The margin $1-(1-p)^G-p$ decreases in $p$, so the amplification is largest where single-rollout recall is lowest, i.e., for small objects in dense scenes. When rollouts are positively correlated, the bound becomes optimistic, but the union still contains every object that at least one rollout detects, so its recall never falls below that of any single rollout.

\section{Experiments}
\label{sec:experiments}

\subsection{Datasets and Evaluation Metrics}
\label{sec:settings}

We evaluate on three benchmarks. \textbf{1) RoadSceneVQA-G} (Section~\ref{sec:dataset}) is evaluated with the AGC suite and the grounding metrics of Section~\ref{sec:metrics}, together with METEOR~\cite{meteor} and BERTScore on the answer span. GCA denotes GCA-l. \textbf{2) gRefCOCO}~\cite{ding2026grex} contains 259,859 referring expressions over 61,316 objects in 19,994 images, including 90,064 multi-object and 34,537 no-object expressions. We follow the UNC partition~\cite{yu2016modeling} and report Pr@F1=1, the fraction of samples with a perfect box-set F1, and N-acc., the accuracy of rejecting no-target expressions. \textbf{3) CARPK}~\cite{hsieh2017drone} is a dense parking-lot counting benchmark with real car boxes, whose 3,716 training images we convert to the EtA format, and we report the count MAE, box recall at an IoU of 0.5, and SPC.

\subsection{Implementation Details}
\label{sec:impl}

\textit{Baselines:} We compare three groups of methods. General-purpose MLLMs include InternVL3.5~\cite{wang2025internvl3}, Qwen3.5~\cite{qwen2026qwen35}, LLaVA-OneVision-2~\cite{an2026llavaonevision2}, and MiniCPM-V~4.6~\cite{yao2024minicpm}, evaluated zero-shot and after standard LoRA fine-tuning. Visual grounding (VG) specialists include DINO-X~\cite{ren2024dino}, queried through its official API, and GroundingDINO-T~\cite{liu2024grounding} and YOLO-World-v2-L~\cite{cheng2024yolo}, fine-tuned on our training split with their official configurations for 12 and 24 epochs, respectively. The specialists receive QA-derived referring expressions and are evaluated only as box predictors. Grounded MLLMs include Ferret~\cite{you2024ferret}, Shikra~\cite{chen2023shikra}, and Groma~\cite{ma2024groma}, re-implemented with Qwen3 as the common language backbone and trained with their native box syntax and official hyperparameters. For the reinforcement learning stage, vanilla GRPO~\cite{shao2024deepseekmath}, Perception-R1~\cite{yu2026perception}, and VisionReasoner~\cite{liu2026visionreasoner} use the reward designs of their official implementations and start from the same EtA-SFT checkpoint as ECPO.

\textit{EtA-SFT:} We fine-tune Qwen3-VL-8B-Instruct~\cite{bai2025qwen3} with LoRA of rank 64 and $\alpha=16$ on the language model and the multimodal projector, keeping the vision encoder frozen. Training runs for three epochs in bf16 with a learning rate of $2\times10^{-4}$, cosine decay, 3\% warm-up, and an effective batch size of 36. Images are capped at 602,112 pixels (about 768 visual tokens) to preserve small distant objects, and sequences at 2,048 tokens. The standard SFT baseline uses the same configuration with the answer-then-ground order.

\textit{ECPO:} Starting from the merged EtA-SFT checkpoint, we train a LoRA adapter of rank 32 and $\alpha=64$ for one epoch with a learning rate of $10^{-6}$. We sample $G=6$ rollouts per prompt, which gives $k=2$ in Eq.\ref{eq:consensus}, and set $\lambda_{\mathrm{grp}}=0.6$, $\lambda_{\mathrm{fmt}}=0.1$, and $\lambda_{\mathrm{ans}}=0.3$. Completions are capped at 640 tokens and images at 401,408 pixels. The reference policy is recovered by disabling the adapter, and other settings follow GRPO~\cite{shao2024deepseekmath}. All experiments run on six NVIDIA RTX~3090 GPUs, and SFT uses DeepSpeed ZeRO-2.

\textit{Transfer settings:} On gRefCOCO (209,344 training expressions), InternVL3-1B~\cite{zhu2025internvl3} uses the same adapter and batch size for three epochs at a learning rate of $10^{-4}$. On CARPK, the sequence cap is raised to 3,072 tokens (Appendix~\ref{A-app:impl}).

\begin{table*}[!t]
\centering
\caption{Overall comparison on the RoadSceneVQA-G test split. Ans.: whether the method produces a VQA answer. N/A: the metric requires an answer, which VG specialists do not produce. AP denotes AP$_{50:95}$, and GCA denotes GCA-l. Best results are in bold, and further metrics are in Appendix~\ref{A-app:full_baselines}.}
\label{tab:roadscene_overview}
\vspace{-3mm}
\setlength{\tabcolsep}{9.9pt}
\begin{tabular}{lc cc cc cccc}
\toprule
 & & \multicolumn{2}{c}{\textbf{Answer quality}} & \multicolumn{2}{c}{\textbf{Grounding}} & \multicolumn{4}{c}{\textbf{AGC (decision / consistency / joint)}} \\
\cmidrule(lr){3-4}\cmidrule(lr){5-6}\cmidrule(lr){7-10}
\textbf{Method} & \textbf{Ans.} & \textbf{METEOR} & \textbf{BERTScore} & \textbf{F1} & \textbf{AP} & \textbf{GDA} & \textbf{SPC} & \textbf{CntAcc} & \textbf{GCA} \\
\midrule
\multicolumn{10}{l}{\textit{Zero-shot MLLMs: answer-capable but weak or unstable box enumeration}} \\
InternVL3.5-8B-HF~\cite{wang2025internvl3} & \cmark & 22.0 & 89.4 & 36.9 & 13.0 & 60.0 & 60.0 & 30.3 & 4.3 \\
Qwen3.5-27B (thinking)~\cite{qwen2026qwen35} & \cmark & 20.1 & 86.1 & 49.1 & 26.1 & 79.5 & 82.3 & 49.8 & 13.2 \\
Qwen3.5-27B (non-thinking)~\cite{qwen2026qwen35} & \cmark & 19.7 & 88.8 & 64.8 & 34.5 & 78.7 & 80.1 & 46.9 & 10.8 \\
LLaVA-OneVision-2-8B~\cite{an2026llavaonevision2} & \cmark & 15.3 & 90.5 & 8.7 & 1.1 & 32.6 & 20.7 & 12.6 & 3.0 \\
\midrule
\multicolumn{10}{l}{\textit{LoRA-SFT MLLMs: language quality improves, but box emission regresses without the EtA structure}} \\
InternVL3.5-8B~\cite{wang2025internvl3} & \cmark & 24.3 & 91.8 & 14.6 & 9.1 & 25.5 & 58.7 & 50.1 & 6.5 \\
MiniCPM-V 4.6~\cite{yao2024minicpm} & \cmark & 23.4 & 91.8 & 13.8 & 2.4 & 88.2 & 61.7 & 51.3 & 6.9 \\
\midrule
\multicolumn{10}{l}{\textit{VG specialists: strong localization after adaptation but no VQA answer}} \\
DINO-X~\cite{ren2024dino} & \xmark & N/A & N/A & 40.2 & 27.9 & 76.5 & N/A & 40.9 & N/A \\
GroundingDINO-T~\cite{liu2024grounding} & \xmark & N/A & N/A & 72.7 & \textbf{62.7} & 86.3 & N/A & 56.1 & N/A \\
YOLO-World-v2-L~\cite{cheng2024yolo} & \xmark & N/A & N/A & 19.8 & 3.0 & 51.4 & N/A & 10.4 & N/A \\
\midrule
\multicolumn{10}{l}{\textit{Grounded MLLMs}} \\
Ferret (Qwen3)~\cite{you2024ferret} & \cmark & 17.6 & \textbf{92.8} & 15.7 & 5.2 & 36.2 & 23.3 & 31.6 & 19.6 \\
Shikra (Qwen3)~\cite{chen2023shikra} & \cmark & 20.0 & 92.4 & 18.3 & 14.3 & 42.3 & 28.1 & 37.8 & 22.5 \\
Groma (Qwen3)~\cite{ma2024groma} & \cmark & 25.1 & 92.6 & 61.8 & 31.2 & 85.4 & 87.8 & 71.6 & 46.1 \\
Qwen3-VL-8B (standard SFT) & \cmark & 26.7 & 92.5 & 52.2 & 31.9 & 88.8 & 26.6 & 52.1 & 46.3 \\
\rowcolor{gray!30} \textbf{Qwen3-VL-8B (EtA-SFT)} & \cmark & 26.6 & 92.6 & 73.0 & 54.4 & 88.0 & 93.7 & 65.9 & 51.2 \\
\rowcolor{gray!30} \textbf{Qwen3-VL-8B (EtA-SFT + ECPO)} & \cmark & \textbf{26.8} & 92.7 & \textbf{75.6} & 60.8 & \textbf{95.6} & \textbf{97.9} & \textbf{72.7} & \textbf{55.5} \\
\bottomrule
\end{tabular}

\end{table*}

\subsection{Quantitative Results}
\label{sec:main_results}

\textbf{Overall comparison.} Table~\ref{tab:roadscene_overview} shows that the three baseline families have complementary weaknesses. Zero-shot MLLMs answer fluently but enumerate unreliably, and the strongest reaches only 64.8 F1. Standard LoRA SFT improves language quality but often suppresses box emission. VG specialists localize well after fine-tuning but produce no answer to verify. With the same 8B backbone, EtA-SFT reaches 73.0 F1, 8.2 points above the best zero-shot MLLM, while remaining answer-capable. ECPO further reaches 75.6 F1, 95.6 GDA, and 72.7 CntAcc, 2.9 points above fine-tuned GroundingDINO-T, while producing an answer that its boxes witness.

\begin{table}[!t]
\centering
\caption{Generalized referring expression comprehension on gRefCOCO. Each entry reports Pr@F1=1 / N-acc.\ (no-target accuracy). }
\label{tab:grefcoco}
\vspace{-3mm}
\setlength{\tabcolsep}{20pt}
\begin{tabular}{lcc}
\toprule
\textbf{Method} & \textbf{testA} & \textbf{testB} \\
\midrule
VLT~\cite{ding2021vision} & 40.2 / 34.1 & 30.2 / 32.5 \\
MDETR~\cite{kamath2021mdetr} & 50.0 / 34.5 & 36.5 / 31.0 \\
UNINEXT~\cite{yan2023universal} & 46.4 / 49.3 & 42.9 / 48.2 \\
ReLA~\cite{liu2023gres} & 50.4 / 59.0 & 44.6 / 58.4 \\
LLMDet~\cite{fu2025llmdet} & 62.8 / 78.9 & 57.3 / 76.2 \\
\midrule
\multicolumn{3}{l}{\textit{Ours (InternVL3-1B~\cite{zhu2025internvl3})}} \\
\rowcolor{gray!30} \textbf{EtA-SFT}        & 54.7 / 91.2 & 46.4 / 85.7 \\
\rowcolor{gray!30} \textbf{EtA-SFT + ECPO} & \textbf{65.6} / \textbf{92.9} & \textbf{59.3} / \textbf{86.6} \\
\bottomrule
\end{tabular}
\end{table}

\textbf{Transfer to gRefCOCO.} gRefCOCO tests the two decisions that EtA learns, enumerating a variable number of targets and abstaining when none exists, in general scenes (Table~\ref{tab:grefcoco}). InternVL3-1B with EtA-SFT and ECPO reaches 65.6 Pr@F1=1 on testA, 2.8 points above LLMDet, and 92.9 N-acc., 14.0 points above LLMDet. ECPO improves Pr@F1=1 over EtA-SFT by 10.9 and 12.9 points on testA and testB, so the consensus reward also helps outside the traffic domain.

\begin{table}[!t]
\centering
\caption{Dense car counting on CARPK. Count MAE is computed per image, and Rec@0.5 is box recall at an IoU of 0.5. }
\label{tab:carpk}
\vspace{-3mm}
\setlength{\tabcolsep}{9pt}
\begin{tabular}{lccc}
\toprule
\textbf{Method} & \textbf{Count MAE} $\downarrow$ & \textbf{Rec@0.5} $\uparrow$ & \textbf{SPC} $\uparrow$ \\
\midrule
MSS/PPM-CNN~\cite{dearruda2022counting} & 4.45 & -- & -- \\
Direct-answer SFT & 2.48 & -- & -- \\
\midrule
\rowcolor{gray!30} \textbf{EtA-SFT} & 2.51 & 82.3 & 78.2 \\
Vanilla GRPO~\cite{shao2024deepseekmath} & 2.53 & \textbf{82.3} & 78.5 \\
\rowcolor{gray!30} \textbf{ECPO} & \textbf{2.46} & \textbf{82.3} & \textbf{81.0} \\
\bottomrule
\end{tabular}

\end{table}

\textbf{Dense counting on CARPK.} CARPK tests whether EtA enumerates real boxes in uniformly dense scenes (Table~\ref{tab:carpk}). EtA-SFT reaches 82.3 box recall with a count MAE of 2.51, close to the 2.48 of direct-answer SFT, which produces no evidence. ECPO attains the lowest MAE (2.46 vs.\ 2.53 for vanilla GRPO) and raises SPC from 78.2 to 81.0.

\vspace{-3mm}
\subsection{Ablation Study}
\label{sec:ablation}

\begin{table}[!t]
\centering
\caption{Consistency controls on the standard SFT (answer-then-ground) checkpoint. Count rewriting replaces each stated count with the number of emitted boxes, leaving the boxes and thus all grounding metrics unchanged. Boxes-first prompting reorders the output at inference time without retraining. }
\label{tab:controls}
\vspace{-3mm}
\setlength{\tabcolsep}{4.5pt}
\begin{tabular}{lcccccc}
\toprule
\textbf{Configuration} & \textbf{SPC} & \textbf{Prec} & \textbf{Rec} & \textbf{F1} & \textbf{CntAcc} & \textbf{GCA} \\
\midrule
Standard SFT (raw) & 26.6 & 83.8 & 37.9 & 52.2 & 52.1 & 46.3 \\
\quad + count rewriting & 99.9 & 83.8 & 37.9 & 52.2 & 52.1 & 40.2 \\
\quad + boxes-first prompting & 2.9 & 9.6 & 78.6 & 17.1 & 18.5 & 12.4 \\
\midrule
\rowcolor{gray!30} \textbf{EtA-SFT} & 93.7 & \textbf{82.6} & 65.4 & 73.0 & 65.9 & 51.2 \\
\rowcolor{gray!30} \textbf{EtA-SFT + ECPO} & \textbf{97.9} & 81.0 & \textbf{70.9} & \textbf{75.6} & \textbf{72.7} & \textbf{55.5} \\
\bottomrule
\end{tabular}

\end{table}

\begin{figure*}[!t]
\centering
\includegraphics[width=\textwidth]{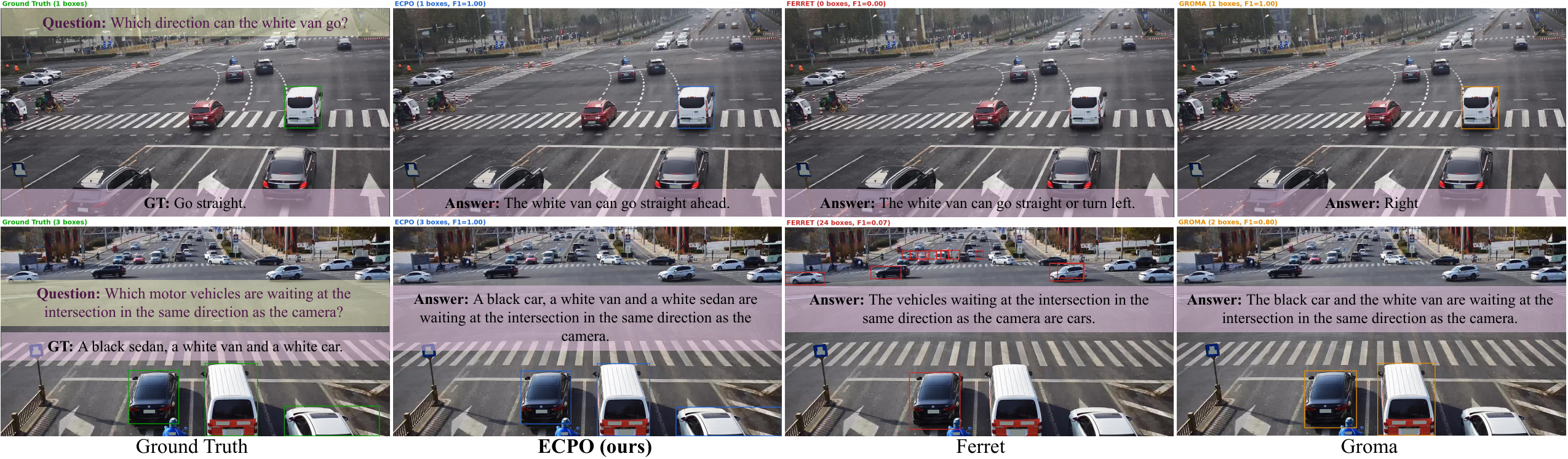}
\vspace{-7mm}
\caption{Qualitative comparison on RoadSceneVQA-G. Top: for a direction question, ECPO localizes the white van and answers correctly, Ferret emits no box and a wrong answer, and Groma localizes the van but answers incorrectly. Bottom: for a multi-object question, ECPO enumerates all three waiting vehicles and names them in its answer, Groma misses the white car, and Ferret emits 24 boxes.}
\label{fig:vis_compare}
\end{figure*}

\textbf{Necessity of the EtA structure.} The standard SFT model shares the backbone, data, and training budget with EtA-SFT and differs only in the serialization order. It is say-point consistent on only 26.6\% of the count-stating predictions, compared with 93.7\% for EtA-SFT and 97.9\% after ECPO (Table~\ref{tab:roadscene_overview}). EtA also raises F1 from 52.2 to 73.0 and AP from 31.9 to 54.4, because the answer no longer commits the model to a count that the enumeration must then justify. The gains concentrate where consistency binds. On enumeration questions, ECPO raises SPC from 15.9 to 99.0 and F1 from 33.1 to 71.3 over standard SFT. On existence questions, it raises GDA from 48.1 to 75.0 and CntAcc from 22.4 to 57.0, whereas single-object categories are near saturation for all models (Appendix~\ref{A-app:per_category}).

\textbf{Consistency controls.} Because SPC reads only the model output, two shortcuts could raise it without EtA training (Table~\ref{tab:controls}). Rewriting every stated count of the standard SFT model to its number of emitted boxes raises SPC to 99.9\%, but the boxes are untouched, so F1 is unchanged and GCA drops from 46.3 to 40.2. The agreement of the stated numbers with the true cardinality also falls from 42.7\% to 14.1\%, because the answers now inherit the enumeration errors. Prompting the same model to emit boxes first at inference time produces more than 57,000 false-positive boxes against 6,114 true positives, so precision collapses to 9.6 and SPC to 2.9. Consistency alone is therefore cheap, and EtA matters because it learns the boxes-first structure while improving the boxes themselves.

\begin{table}[!t]
\centering
\caption{Comparison of reinforcement learning methods on RoadSceneVQA-G. All methods start from the same EtA-SFT checkpoint. The first two rows are the SFT references. GT: whether the reward uses per-box ground-truth annotations. R@7+: recall on samples with $\ge 7$ objects.}
\label{tab:main}
\vspace{-3mm}
\setlength{\tabcolsep}{7.5pt}
\begin{tabular}{lccccc}
\toprule
\textbf{Method} & \textbf{GT} & \textbf{F1} & \textbf{GDA} & \textbf{CntAcc} & \textbf{R@7+} \\
\midrule
Standard SFT & -- & 52.2 & 88.8 & 52.1 & 6.7 \\
EtA-SFT      & -- & 73.0 & 88.0 & 65.9 & 34.8 \\
\midrule
Vanilla GRPO~\cite{shao2024deepseekmath} & --     & 73.7 & 95.1 & 72.0 & 36.8 \\
Perception-R1~\cite{yu2026perception} & \cmark & \textbf{75.6} & 95.2 & 72.3 & 37.2 \\
VisionReasoner~\cite{liu2026visionreasoner} & \cmark & \textbf{75.6} & 95.0 & 72.0 & 37.1 \\
\rowcolor{gray!30} \textbf{ECPO (ours)} & \xmark & \textbf{75.6} & \textbf{95.6} & \textbf{72.7} & \textbf{38.1} \\
\bottomrule
\end{tabular}

\end{table}

\begin{table}[!t]
\centering
\caption{Ablation of the ECPO reward terms. Each row modifies the reward of vanilla GRPO. Group-union coverage accounts for the gain. The last row adds a per-box ground-truth recall term and serves as a supervised reference.}
\label{tab:ablation}
\vspace{-3mm}
\setlength{\tabcolsep}{9pt}
\begin{tabular}{lcccc}
\toprule
\textbf{Configuration} & \textbf{F1} & \textbf{Rec} & \textbf{GDA} & \textbf{CntAcc} \\
\midrule
Vanilla GRPO & 73.7 & 69.8 & 95.1 & 72.0 \\
\quad + group-union coverage & 75.6 & 70.9 & 95.6 & 72.7 \\
\quad + SPC only & 75.4 & 70.5 & 94.7 & 71.6 \\
\quad + union + SPC & 75.9 & 71.3 & 95.9 & 73.0 \\
\rowcolor{gray!30} \quad + union + GT recall & \textbf{76.0} & \textbf{71.4} & \textbf{96.1} & \textbf{73.2} \\
\bottomrule
\end{tabular}

\end{table}

\textbf{Reinforcement learning methods.} In Table~\ref{tab:main}, vanilla GRPO reaches 73.7 F1, and ECPO reaches 75.6 F1, the same score as Perception-R1 and VisionReasoner, whose rewards use Hungarian-matched per-box ground truth. ECPO attains the highest R@7+ (38.1). The methods differ in supervision rather than performance, and because per-box annotation is the costliest part of grounded VQA data, reaching the supervised operating point without it is the practical benefit of ECPO.

\textbf{Reward terms.} Table~\ref{tab:ablation} shows that group-union coverage is the operative term. A say-point consistency term alone slightly degrades the grounding decision, and adding it to the union reward changes little, because the EtA serialization already provides this consistency. A per-box ground-truth recall term adds only 0.4 F1, so ECPO omits it.

\textbf{Under-enumeration and consensus quality.} The residual error of EtA-SFT is almost entirely omission. Its recall falls from 77\% on single-object questions to 34.8\% on questions with seven or more objects, while precision stays high. All reinforcement learning variants plateau between 36.8\% and 38.1\% R@7+, with or without per-box supervision. Measurements with $G=6$ rollouts on 2,000 test prompts (Table~\ref{A-tab:consensus} in the appendix) explain this plateau. The union recall of 77.9\% is 18.1 points below the independence prediction of Proposition~\ref{prop:ecpo}, yet nearly twice the single-rollout detection rate of 0.416. The consensus precision rises with support from 53.7\% at $k=2$ to 91.0\% at full support. On the 7+ bucket, however, the teacher retains only 32.1\% of the ground-truth objects, which bounds what the coverage reward can teach. The omitted objects are mostly small and distant, which points to input resolution and training evidence at the SFT stage.

\subsection{Qualitative Analysis}
\label{sec:qualitative}

\begin{figure}[!t]
\centering
\includegraphics[width=\columnwidth]{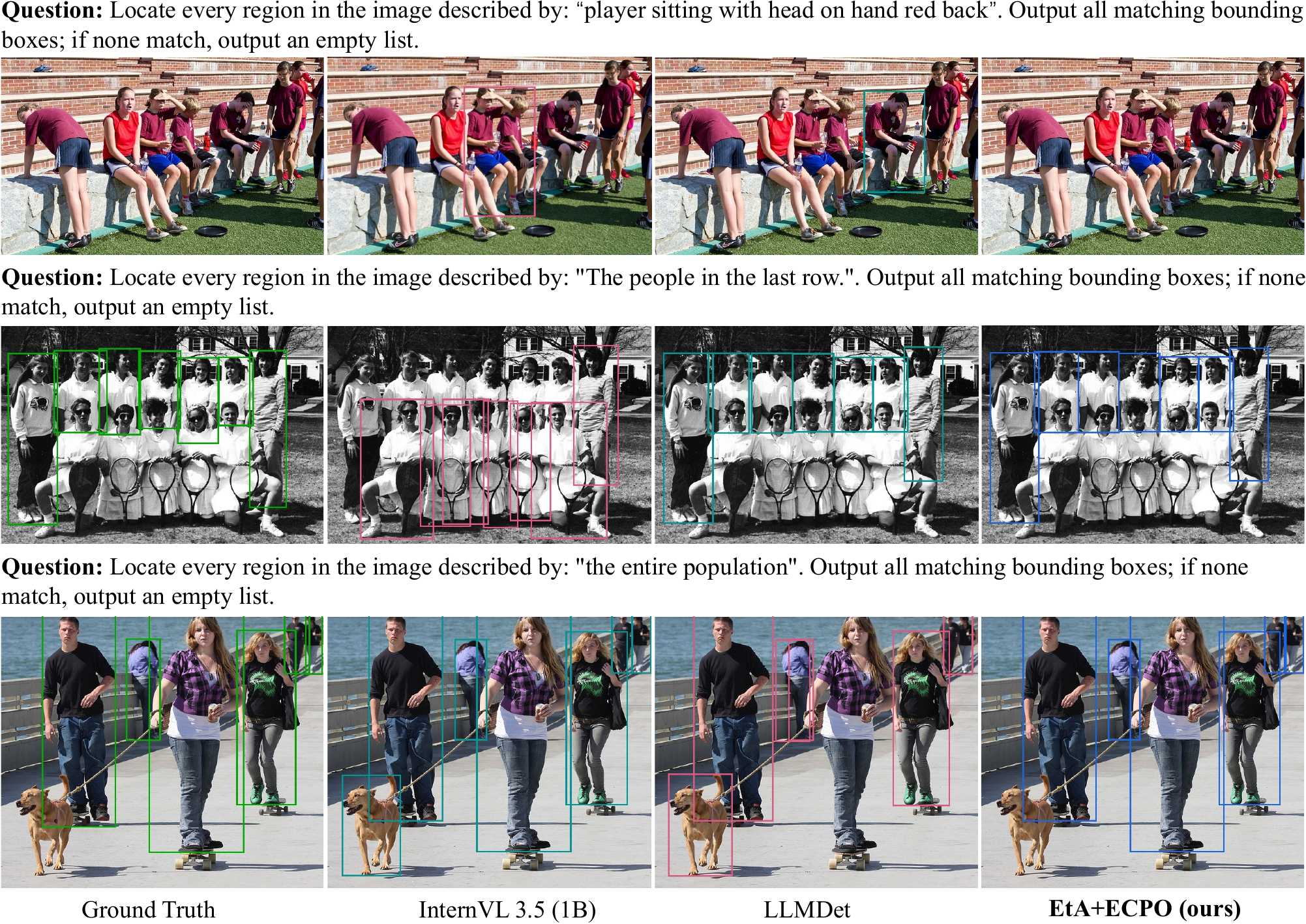}
\vspace{-6mm}
\caption{Qualitative comparison on gRefCOCO. Columns: ground truth, InternVL3.5-1B~\cite{wang2025internvl3}, LLMDet~\cite{fu2025llmdet}, and EtA+ECPO (ours). Top: a no-target expression, for which InternVL3.5-1B and LLMDet each output a spurious box, whereas our model correctly returns an empty set. Middle and bottom: collective multi-target expressions, for which our model enumerates the referred instances with tight boxes, whereas the baselines under-enumerate or leak onto visually similar distractors.}
\label{fig:grefcoco_main}
\end{figure}

\textbf{Roadside scenes.} Fig.~\ref{fig:vis_compare} compares ECPO with Ferret and Groma. Groma localizes the correct van for a direction question but answers wrongly, an error that GCA penalizes and box metrics alone miss. For a multi-object question, ECPO enumerates and names all three waiting vehicles, whereas Groma misses one and Ferret scatters 24 boxes.

\textbf{General scenes.} In Fig.~\ref{fig:grefcoco_main}, InternVL3.5-1B and LLMDet each output a spurious box for a no-target expression, whereas our model returns an empty set and enumerates the referred instances of collective expressions more completely. CARPK examples are given in Appendix~\ref{A-app:qualitative}.

\section{Conclusion}
\label{sec:conclusion}

In this paper, we identified say-point mismatch as a structural flaw inherent to the answer-then-ground formulation and introduced RoadSceneVQA-G along with the AGC evaluation suite to measure it. To address this issue, EtA eliminates this failure mode by generating visual evidence prior to the textual answer, while ECPO recovers dense-scene recall using consensus rewards that operate without box annotations. These findings demonstrate that decoding order is a critical design choice with direct consequences for multimodal alignment, and reversing it requires neither additional supervision nor parameters. In future work, we plan to extend this formulation to video streams and adverse environmental conditions while refining high-resolution encoding for dense scenes. RoadSceneVQA-G and the AGC suite will be publicly released to support further research.

\appendices

\section{Additional Metric Details}
\label{app:metrics}

The AGC metrics (GDA, CntAcc, SPC, and GCA) are defined in Section~\ref{M-sec:metrics} of the main paper. We use the same notation here.

\textit{Grounding quality:} Box matching uses the max-IoU bipartite Hungarian assignment $M_\tau(\hat{B},B^\star)$ at threshold $\tau$, and a matched pair is correct when its IoU reaches $\tau$. Following the standard detection protocol~\cite{lin2014microsoft}, we report precision, recall, and F1 at $\tau=0.5$, together with AP at an IoU of 0.5 and the COCO-style mean over $\mathrm{IoU}\in\{0.5,0.55,\dots,0.95\}$. For dense scenes, we report recall on samples with four to six ground-truth boxes and on those with seven or more, denoted R@4-6 and R@7+.

\textit{Count error:} In addition to CntAcc, we use the mean absolute cardinality error
\begin{equation}
\mathrm{CntMAE}=\frac{1}{N}\sum_{i=1}^{N}\bigl|\,|\hat{B}_i|-|B^\star_i|\,\bigr|.
\label{eq:app_count}
\end{equation}

\textit{Answer matching in GCA:} The strict predicate (GCA-s) applies exact or containment matching after lowercasing. The lenient predicate (GCA-l) first normalizes case, punctuation, and number words, so that \emph{four} and 4 are treated as equal. It then accepts a bare count whenever the stated numbers agree and accepts any other answer under containment or a token-level F1 of at least 0.5. The lenient criterion is lexical and therefore conservative for free-form answers, so GCA is a lower bound on joint correctness, and semantic answer quality is assessed separately with BERTScore. All GCA values in the main paper use the lenient variant.

\textit{SPC ceiling:} The reference annotations attain an SPC of about 92\%, whereas the human re-adjudication verifies count-box agreement on 99.2\% of the audited counting items. The gap comes from approximate quantity words, which pass human verification but not the exact-match SPC rule.

\textit{Text metrics:} BLEU~\cite{papineni2002bleu}, ROUGE-L~\cite{lin2004rouge}, METEOR~\cite{meteor}, and BERTScore are computed on the answer span only, with the box serialization removed. CIDEr~\cite{vedantam2015cider} is reported for completeness in Appendix~\ref{app:full_baselines}.

\section{Additional Implementation Details}
\label{app:impl}

The main training settings are given in Section~\ref{M-sec:impl} of the main paper. This section lists the remaining details.

\textit{Hardware and batching:} All runs use six NVIDIA RTX~3090 GPUs with 24~GB of memory each. For EtA-SFT on RoadSceneVQA-G, the effective batch size of 36 consists of a per-device batch size of 1 and 6 gradient-accumulation steps on six GPUs, and we use gradient checkpointing. The standard SFT baseline shares this configuration exactly.

\textit{ECPO:} ECPO runs for one epoch over the RoadSceneVQA-G training split with the base weights frozen. At the end of training, the KL divergence to the reference policy is approximately $2.5\times10^{-3}$.

\textit{VG specialists:} GroundingDINO-T and YOLO-World-v2-L are evaluated both zero-shot and after fine-tuning. Runtimes, checkpoints, and the number of predicted boxes are listed with the results in Table~\ref{tab:roadscene_vg_specialists}.

\textit{Transfer to gRefCOCO:} The gRefCOCO training split contains 209,344 referring expressions after conversion to the EtA format~\cite{ding2026grex}. We fine-tune both Qwen3-VL-8B and InternVL3-1B with the EtA serialization. Qwen3-VL-8B uses the RoadSceneVQA-G recipe (LoRA rank 64, $\alpha=16$, learning rate $2\times10^{-4}$) for one epoch. InternVL3-1B uses the same adapter configuration with a learning rate of $10^{-4}$, three epochs, and an effective batch size of 36 (per-device batch size 2, gradient accumulation 3, six GPUs). ECPO starts from the corresponding EtA-SFT checkpoint with the settings of the main paper. Evaluation follows the official GREC protocol on the validation, testA, and testB splits.

\textit{Training on CARPK:} The 3,716 CARPK~\cite{hsieh2017drone} training images are converted to the EtA format. Qwen3-VL-8B is trained with the RoadSceneVQA-G recipe (LoRA rank 64, $\alpha=16$, three epochs, learning rate $2\times10^{-4}$, effective batch size 36), except that the sequence cap is raised to 3,072 tokens because parking-lot scenes produce long box lists. ECPO then runs with the RoadSceneVQA-G settings. Direct-answer SFT shares this configuration with answer-only targets.

\section{Additional Dataset Details}
\label{app:dataset}

\begin{figure*}[!t]
\centering
\includegraphics[width=\textwidth]{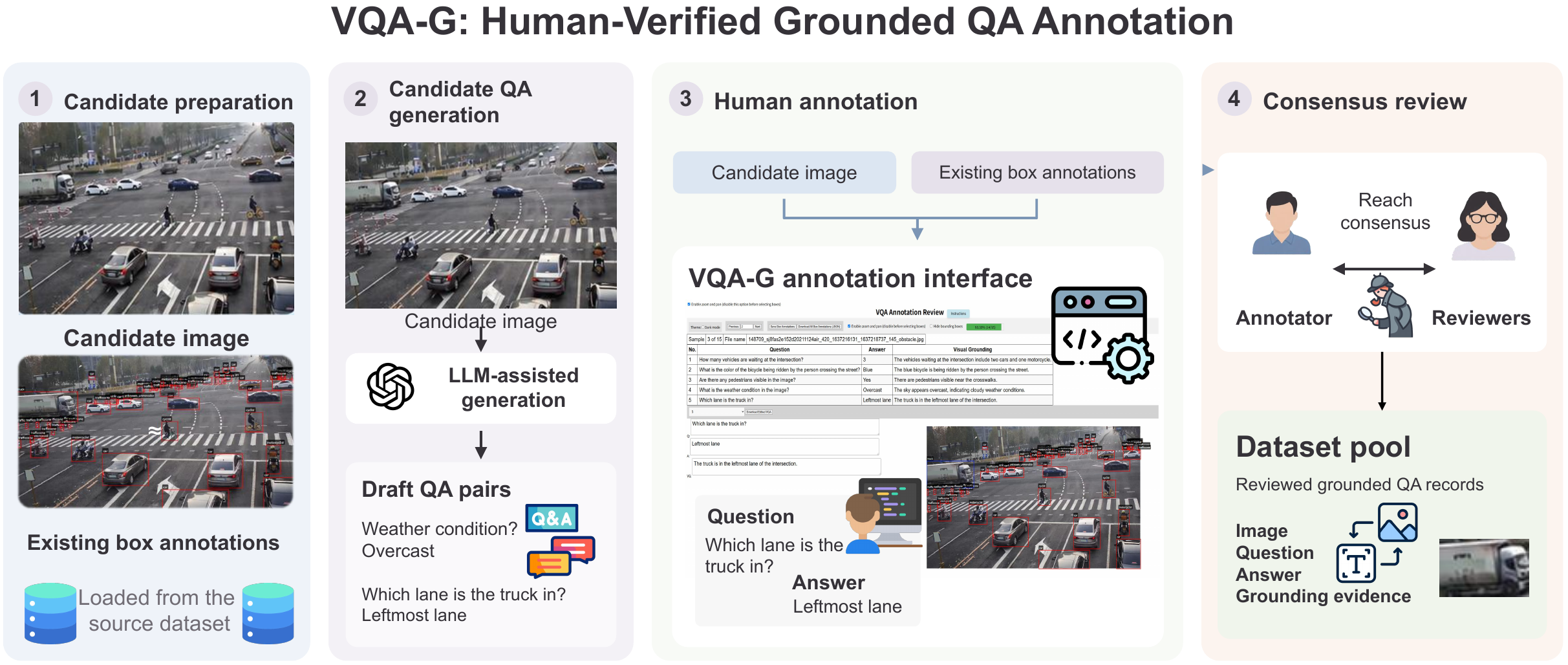}
\vspace{-3.5mm}
\caption{Annotation workflow of RoadSceneVQA-G. (1) Candidate frames and their existing box annotations are loaded from the source dataset. (2) An LLM drafts candidate QA pairs. (3) Annotators edit each question and answer and select the evidence boxes in a dedicated interface. (4) Two reviewers reach consensus before an item enters the dataset.}
\label{fig:annotation_tools}
\end{figure*}

\textit{Annotation workflow:} Fig.~\ref{fig:annotation_tools} illustrates the annotation pipeline described in Section~\ref{M-sec:construction}.

\textit{Taxonomy assignment:} The capability categories in Table~\ref{M-tab:qtype} are assigned by an LLM classifier based on GPT-5~\cite{openai2025gpt5} over all questions rather than by keyword matching. The classifier reads each question and selects one of nine categories, and the resulting labels are used only for analysis. The reasoning share of the test split is twice that of the training split, which is the shift we exploit for diagnosis.

\textit{Independent MLLM audit:} In addition to the consensus review and the human re-adjudication described in Section~\ref{M-sec:construction}, we audit a stratified random sample of 5,000 annotations with a strong MLLM acting as an independent judge. The judge receives the image, the question, the reference answer, and the selected evidence boxes, and it verifies answer correctness, evidence correctness, and count-box agreement. As summarized in Table~\ref{tab:app_audit}, the judge finds no error in any audited item. The judge itself is fallible: the calibration in Appendix~\ref{app:controls_judge} shows that it detects only about half of deliberately corrupted evidence sets. The audit is therefore best read as a conservative floor on top of the human-enforced consistency contract rather than as a standalone guarantee.

\begin{table}[!t]
\centering
\caption{Independent MLLM audit of annotation quality on 5,000 randomly sampled items. Given the calibrated detection rates of the judge (Table~\ref{tab:app_judge}), these rates are a conservative floor on top of the human consensus review rather than a standalone guarantee.}
\label{tab:app_audit}
\vspace{-1.5mm}
\setlength{\tabcolsep}{3pt}
\begin{tabular*}{\columnwidth}{@{\extracolsep{\fill}}lccc@{}}
\toprule
 & \textbf{Test} & \textbf{Train} & \textbf{All} \\
\midrule
Answer correct & 100.0\% & 100.0\% & 100.0\% \\
Evidence boxes correct & 100.0\% & 100.0\% & 100.0\% \\
Both correct & 100.0\% & 100.0\% & 100.0\% \\
Count matches box cardinality & 100.0\% & 100.0\% & 100.0\% \\
\bottomrule
\end{tabular*}

\end{table}

\textit{Example items:} Table~\ref{tab:app_examples} shows representative items from the test split. The evidence set may contain several boxes, a single box, or none, and the answer is expected to agree with the box set whenever it refers to identifiable objects.

\begin{table}[!t]
\centering
\caption{Representative items from the RoadSceneVQA-G test split. Boxes: number of evidence boxes bound to the answer. An empty set (0) marks a no-ground question.}
\label{tab:app_examples}
\vspace{-1.5mm}
\setlength{\tabcolsep}{3pt}
\begin{tabular*}{\columnwidth}{@{\extracolsep{\fill}}>{\raggedright\arraybackslash}p{1.6cm}>{\raggedright\arraybackslash}p{3.45cm}lc@{}}
\toprule
\textbf{Category} & \textbf{Question} & \textbf{Answer} & \textbf{Boxes} \\
\midrule
Existence & Is there any pedestrian crossing the street? & Yes & 7 \\
Enumeration & How many cars are waiting at the intersection? & 3 & 3 \\
Attribute & What is the color of the car closest to the camera? & Black & 1 \\
Scene inference & What is the weather condition in the image? & Overcast & 0 \\
\bottomrule
\end{tabular*}
\end{table}

\textit{License and release:} The imagery and object labels derive from the Rope3D dataset~\cite{ye2022rope3d}, and the question coverage derives from RoadSceneVQA~\cite{guan2026roadscenevqa}. Both are used under their original licenses. We will release the QA annotations, the evidence bindings, and the evaluation code upon publication.

\section{Complete Baseline Results}
\label{app:full_baselines}

Table~\ref{tab:roadscene_mllm_full} reports the complete zero-shot and LoRA-SFT MLLM sweep on RoadSceneVQA-G, including text-generation metrics, grounding metrics, and inference latency. Table~\ref{tab:roadscene_grounded_full} reports the grounded MLLM baselines fine-tuned on the same training split. Table~\ref{tab:roadscene_vg_specialists} reports the VG specialists as box predictors on QA-derived expressions.

\begin{table*}[!t]
\centering
\caption{Detailed zero-shot and LoRA-SFT MLLM baselines on RoadSceneVQA-G. Text metrics improve after standard LoRA SFT, but grounding recall and F1 often collapse because the models are not required to read the answer out of the box set. GDA corr.: raw number of correct grounding decisions. CntAcc is reported together with its support CntSup. Time: inference latency per sample.}
\label{tab:roadscene_mllm_full}
\vspace{-1.5mm}
\setlength{\tabcolsep}{3.2pt}
\resizebox{\textwidth}{!}{%
\begin{tabular}{llrrrrrrrrrrr}
\toprule
\textbf{Model} & \textbf{Setting} & \textbf{CIDEr} & \textbf{BERT-F1} & \textbf{Prec} & \textbf{Rec} & \textbf{F1} & \textbf{AP} & \textbf{GDA} & \textbf{GDA corr.} & \textbf{CntAcc} & \textbf{CntSup} & \textbf{Time} \\
\midrule
InternVL3.5-8B-HF~\cite{wang2025internvl3} & zero-shot & 76.5 & 89.4 & 40.4 & 33.9 & 36.9 & 13.0 & 60.0 & 2805/4677 & 60.0 & 25 & -- \\
InternVL3.5-8B-HF~\cite{wang2025internvl3} & LoRA SFT & 123.6 & 91.8 & 56.5 & 8.4 & 14.6 & 9.1 & 25.5 & 1193/4677 & 58.7 & 579 & 1.36\,s \\
Qwen3.5-27B~\cite{qwen2026qwen35} & zero-shot, thinking & 37.0 & 86.1 & 37.1 & \textbf{72.9} & 49.1 & 26.1 & 79.5 & 3719/4677 & \textbf{82.3} & 277 & -- \\
Qwen3.5-27B~\cite{qwen2026qwen35} & zero-shot, non-thinking & 55.0 & 88.8 & \textbf{61.8} & 68.1 & \textbf{64.8} & \textbf{34.5} & 78.7 & 3682/4677 & 80.1 & 181 & -- \\
MiniCPM-o 4.5~\cite{cui2026minicpmo} & zero-shot, thinking & 45.7 & 85.7 & 9.0 & 2.5 & 3.9 & 0.4 & 45.8 & 2144/4677 & 38.5 & 143 & -- \\
MiniCPM-o 4.5~\cite{cui2026minicpmo} & zero-shot, non-thinking & 70.9 & 88.2 & 6.3 & 3.4 & 4.4 & 0.3 & 63.4 & 2964/4677 & 32.0 & 1265 & -- \\
MiniCPM-o 4.5~\cite{cui2026minicpmo} & LoRA SFT & \textbf{124.7} & 90.9 & 2.7 & 1.8 & 2.2 & 0.5 & 36.0 & 1683/4677 & 30.2 & 481 & 2.55\,s \\
MiniCPM-V 4.6~\cite{yao2024minicpm} & zero-shot, non-thinking & 54.6 & 87.5 & 36.0 & 0.1 & 0.2 & 6.4 & 16.3 & 760/4677 & 0.0 & 1 & -- \\
MiniCPM-V 4.6~\cite{yao2024minicpm} & LoRA SFT & 113.2 & \textbf{91.8} & 16.1 & 12.2 & 13.8 & 2.4 & \textbf{88.2} & \textbf{4125/4677} & 61.7 & 790 & 1.69\,s \\
LLaVA-OneVision-2-8B~\cite{an2026llavaonevision2} & zero-shot & 54.5 & 90.5 & 23.2 & 5.4 & 8.7 & 1.1 & 32.6 & 1526/4677 & 20.7 & 1192 & 2.05\,s \\
LLaVA-OneVision-2-8B~\cite{an2026llavaonevision2} & LoRA SFT & 92.5 & 91.7 & 5.8 & 0.6 & 1.1 & 0.6 & 25.0 & 1169/4677 & 28.6 & 1119 & 1.66\,s \\
\bottomrule
\end{tabular}}

\end{table*}

\begin{table*}[!t]
\centering
\caption{Grounded MLLM baselines on RoadSceneVQA-G. All models are fine-tuned on the same training split with their native box syntax. GCA is reported in its strict (GCA-s) and lenient (GCA-l) variants.}
\label{tab:roadscene_grounded_full}
\vspace{-1.5mm}
\setlength{\tabcolsep}{3.2pt}
\resizebox{\textwidth}{!}{%
\begin{tabular}{lrrrrrrrrrrrrrr}
\toprule
\textbf{Model} & \textbf{BLEU-4} & \textbf{CIDEr} & \textbf{ROUGE-L} & \textbf{METEOR} & \textbf{BERT-F1} & \textbf{Prec} & \textbf{Rec} & \textbf{F1} & \textbf{AP} & \textbf{GDA} & \textbf{SPC} & \textbf{CntAcc} & \textbf{GCA-s} & \textbf{GCA-l} \\
\midrule
Ferret-7B~\cite{you2024ferret} & 9.1 & 97.4 & 33.5 & 17.6 & 92.8 & 45.1 & 9.5 & 15.7 & 7.8 & 36.2 & 23.3 & 31.6 & 14.3 & 19.6 \\
Shikra-7B~\cite{chen2023shikra} & 10.4 & 110.3 & 34.6 & 20.0 & 92.4 & 56.8 & 10.9 & 18.3 & 14.3 & 42.3 & 28.1 & 37.8 & 15.1 & 22.5 \\
Groma-7B~\cite{ma2024groma} & 12.6 & 131.9 & 35.9 & 25.1 & 92.6 & 66.9 & 57.4 & 61.8 & 39.5 & 95.4 & 97.8 & 71.6 & 25.3 & 46.1 \\
\midrule
EtA-SFT (ours) & 13.9 & 146.7 & 39.7 & 26.6 & 92.6 & \textbf{82.6} & 65.4 & 73.0 & 54.4 & 88.0 & 93.7 & 65.9 & 29.6 & 51.2 \\
ECPO (ours) & \textbf{14.0} & \textbf{147.0} & \textbf{40.0} & \textbf{26.8} & 92.7 & 81.0 & \textbf{70.9} & \textbf{75.6} & \textbf{60.8} & \textbf{95.6} & \textbf{97.9} & \textbf{72.7} & \textbf{32.3} & \textbf{55.5} \\
\bottomrule
\end{tabular}}

\end{table*}

\begin{table*}[!t]
\centering
\caption{Visual grounding specialists on RoadSceneVQA-G. These methods are evaluated only as box predictors with the same QA-derived referring expressions as input. They produce no VQA answer, so answer-dependent metrics are not applicable.}
\label{tab:roadscene_vg_specialists}
\vspace{-1.5mm}
\setlength{\tabcolsep}{4pt}
\resizebox{\textwidth}{!}{%
\begin{tabular}{lllrrrrrrrr}
\toprule
\textbf{Model} & \textbf{Runtime} & \textbf{Checkpoint} & \textbf{Pred. boxes} & \textbf{Prec} & \textbf{Rec} & \textbf{F1} & \textbf{AP$_{50:95}$} & \textbf{GDA} & \textbf{GDA corr.} & \textbf{Time} \\
\midrule
GroundingDINO-T~\cite{liu2024grounding} & official & Swin-T OGC & 3,680 & 31.0 & 14.7 & 19.9 & 7.2 & 29.9 & 1398/4677 & 0.194\,s \\
GroundingDINO-T~\cite{liu2024grounding} & MMDetection, zero-shot & Swin-T OGC & 10,566 & 42.1 & 57.1 & 48.5 & 26.5 & 79.5 & 3716/4677 & 0.190\,s \\
GroundingDINO-T~\cite{liu2024grounding} & MMDetection, fine-tuned & epoch 12 & 8,965 & \textbf{67.9} & \textbf{78.2} & \textbf{72.7} & \textbf{62.7} & \textbf{86.3} & \textbf{4038/4677} & 0.177\,s \\
YOLO-World-v2-L 640~\cite{cheng2024yolo} & zero-shot & Obj365+GoldG & 12,797 & 0.1 & 0.2 & 0.2 & 0.0 & 85.3 & 3988/4677 & 0.037\,s \\
YOLO-World-v2-L 640~\cite{cheng2024yolo} & fine-tuned & epoch 24 & 5,730 & 23.3 & 17.2 & 19.8 & 3.0 & 51.4 & 2405/4677 & 0.037\,s \\
DINO-X~\cite{ren2024dino} & official API & -- & 16,180 & 29.7 & 61.8 & 40.2 & 27.9 & 76.5 & 3577/4677 & 1.934\,s \\
\bottomrule
\end{tabular}}

\end{table*}

\section{Additional Controls and Analyses}
\label{app:controls}

\subsection{Consensus-Set Statistics}
\label{app:consensus}

Table~\ref{tab:consensus} reports the full rollout-correlation and consensus-set statistics analyzed in Section~\ref{M-sec:ablation} of the main paper.

\begin{table}[!t]
\centering
\caption{Rollout correlation and consensus-set quality, measured with $G=6$ rollouts on 2,000 test prompts. Support $s$ counts the rollouts that recover each ground-truth object.}
\label{tab:consensus}
\vspace{-3mm}
\setlength{\tabcolsep}{3pt}
\begin{tabular*}{\columnwidth}{@{\extracolsep{\fill}}lc@{}}
\toprule
\textbf{Quantity} & \textbf{Value} \\
\midrule
Object share with $s=0$ / $1$ / $\ge k$ (\%) & 22.1 / 14.9 / 63.0 \\
Marginal per-rollout detection rate $\hat{p}$ & 0.416 \\
Union recall $R_{\mathcal{U}}$ & 77.9\% \\
Independence prediction $1-(1-\hat{p})^{G}$ & 96.0\% \\
Consensus P / R / F1 (overall) & 75.8 / 62.3 / 68.4 \\
Consensus P / R / F1 (7+ bucket) & 79.8 / 32.1 / 45.8 \\
Full-support clusters without GT match & 9.0\% \\
Mean support ratio, area Q1 to Q4 & 0.17 / 0.34 / 0.54 / 0.60 \\
\bottomrule
\end{tabular*}

\end{table}

\subsection{Distribution-Matched Re-Weighting of the Test Split}
\label{app:controls_reweight}

The training and test frames come from disjoint camera collections inherited from the source datasets, and the capability mix differs between the two splits (Section~\ref{M-sec:statistics}). To verify that the main gains are not an artifact of this mismatch, we re-weight every test sample by $w_c=p_{\mathrm{train}}(c)/p_{\mathrm{test}}(c)$ over the nine capability categories and recompute the metrics (Table~\ref{tab:app_reweight}). Under the matched distribution, ECPO remains the best of the three models on all four metrics, and its GCA-l margin over standard SFT is preserved (9.2 points raw and 9.3 points weighted), so the main conclusions do not depend on the split composition. The re-weighting mainly raises the SPC of standard SFT, because the training distribution contains fewer count-stating enumeration questions.

\begin{table}[!t]
\centering
\caption{Importance-weighted test metrics. Each test sample is weighted by the ratio of its capability-category share between the training and test splits (raw: unweighted, wtd: weighted). Box-level F1 is computed over all matched pairs and is unaffected by per-sample weights, so it is omitted.}
\label{tab:app_reweight}
\vspace{-1.5mm}
\setlength{\tabcolsep}{3pt}
\begin{tabular*}{\columnwidth}{@{\extracolsep{\fill}}lcccccccc@{}}
\toprule
 & \multicolumn{2}{c}{\textbf{GDA}} & \multicolumn{2}{c}{\textbf{SPC}} & \multicolumn{2}{c}{\textbf{CntAcc}} & \multicolumn{2}{c}{\textbf{GCA-l}} \\
\cmidrule(lr){2-3}\cmidrule(lr){4-5}\cmidrule(lr){6-7}\cmidrule(lr){8-9}
\textbf{Method} & raw & wtd & raw & wtd & raw & wtd & raw & wtd \\
\midrule
Standard SFT & 88.8 & 84.2 & 26.6 & 58.0 & 52.1 & 60.6 & 46.3 & 59.6 \\
EtA-SFT & 88.0 & 84.2 & 93.7 & 95.2 & 65.9 & 67.5 & 51.2 & 62.0 \\
ECPO & 95.6 & 92.9 & 97.9 & 97.2 & 72.7 & 78.4 & 55.5 & 68.9 \\
\bottomrule
\end{tabular*}

\end{table}

\subsection{Per-Category Metric Breakdown}
\label{app:per_category}

Using the capability labels in Table~\ref{M-tab:qtype}, we break down the three main model variants by question category (Table~\ref{tab:app_per_category}). The gains concentrate where the consistency requirement binds. On enumeration questions, ECPO raises SPC from 15.9 to 99.0 and F1 from 33.1 to 71.3 relative to standard SFT. On existence questions, it more than doubles CntAcc and raises GDA from 48.1 to 75.0. Categories dominated by single-object or global questions are close to saturation for all models, so the remaining headroom lies in the categories that require complete enumeration.

\begin{table*}[!t]
\centering
\caption{Per-category metrics on the test split (Std: standard SFT, EtA: EtA-SFT, ECPO: EtA-SFT + ECPO). Scene inference has no ground-truth boxes, so its grounding and consistency metrics are undefined. SPC on attribute recognition ($^{\dagger}$) has a small support because such answers rarely state a count.}
\label{tab:app_per_category}
\vspace{-1.5mm}
\setlength{\tabcolsep}{2.6pt}
\resizebox{\textwidth}{!}{%
\begin{tabular}{llr ccc ccc ccc ccc ccc}
\toprule
 & & & \multicolumn{3}{c}{\textbf{F1}} & \multicolumn{3}{c}{\textbf{GDA}} & \multicolumn{3}{c}{\textbf{SPC}} & \multicolumn{3}{c}{\textbf{CntAcc}} & \multicolumn{3}{c}{\textbf{GCA-l}} \\
\cmidrule(lr){4-6}\cmidrule(lr){7-9}\cmidrule(lr){10-12}\cmidrule(lr){13-15}\cmidrule(lr){16-18}
\textbf{Tier} & \textbf{Category} & \textbf{$n$} & Std & EtA & ECPO & Std & EtA & ECPO & Std & EtA & ECPO & Std & EtA & ECPO & Std & EtA & ECPO \\
\midrule
\multirow{5}{*}{Perception}
& Existence               & 509  & 32.2 & 50.4 & \textbf{64.8} & 48.1 & 52.7 & \textbf{75.0} & 76.8 & 95.2 & \textbf{96.0} & 22.4 & 33.4 & \textbf{57.0} & 35.8 & 43.4 & \textbf{62.9} \\
& Enumeration             & 1344 & 33.1 & 70.2 & \textbf{71.3} & 90.8 & 91.5 & \textbf{98.0} & 15.9 & 93.5 & \textbf{99.0} & 7.7 & 43.9 & \textbf{46.1} & 26.5 & 42.2 & \textbf{44.0} \\
& Attribute recognition   & 1440 & 87.8 & 87.1 & \textbf{90.2} & 96.9 & 94.4 & \textbf{99.0} & 50.0$^{\dagger}$ & \textbf{83.3}$^{\dagger}$ & 40.0$^{\dagger}$ & 94.2 & 91.6 & \textbf{98.0} & 78.5 & 77.4 & \textbf{81.1} \\
& Object identification   & 476  & 54.9 & 68.2 & \textbf{70.1} & \textbf{98.1} & 88.0 & 96.2 & 5.3 & \textbf{89.3} & 82.9 & 59.4 & 65.5 & \textbf{70.8} & 13.7 & 14.7 & \textbf{15.8} \\
& Scene inference         & 182  & -- & -- & -- & 100.0 & 100.0 & 100.0 & -- & -- & -- & -- & -- & -- & 82.4 & 80.8 & \textbf{81.9} \\
\midrule
\multirow{3}{*}{Reasoning}
& Spatial reasoning       & 487  & 88.8 & 90.4 & \textbf{94.4} & 87.9 & 90.3 & \textbf{97.7} & -- & -- & -- & 86.0 & 89.7 & \textbf{97.4} & 38.6 & 37.0 & \textbf{40.7} \\
& Behavioral reasoning    & 157  & 85.4 & 86.0 & \textbf{88.8} & 89.8 & 91.1 & \textbf{95.5} & 57.1 & 71.4 & \textbf{71.4} & 85.2 & 86.5 & \textbf{91.0} & 29.9 & 32.5 & \textbf{34.4} \\
& Normative reasoning     & 80   & 91.4 & 89.9 & \textbf{92.8} & 93.8 & 91.2 & \textbf{96.2} & 100.0 & 100.0 & 100.0 & 97.3 & 94.7 & \textbf{100.0} & \textbf{56.2} & 51.2 & 52.5 \\
\bottomrule
\end{tabular}}

\end{table*}

\subsection{Calibration of the Annotation Judge}
\label{app:controls_judge}

The independent audit in Appendix~\ref{app:dataset} relies on an MLLM judge, so we calibrate the judge itself. We draw 200 accepted test items, inject known errors of three types (wrong evidence boxes sampled from other scenes, a corrupted numeric answer, and an extra spurious box), and ask the same judge to audit the perturbed items. Table~\ref{tab:app_judge} reports the detection rate per perturbation type for two judge scales. The larger judge detects extra boxes reliably and catches about half of the swapped evidence sets. The audit numbers in Appendix~\ref{app:dataset} are therefore best read as a conservative floor combined with the human-enforced consistency contract described in Section~\ref{M-sec:construction}, rather than as a standalone guarantee. The annotation contract itself was enforced by human reviewers at review time and is independent of the judge.

\begin{table}[!t]
\centering
\caption{Calibration of the MLLM judge: detection rate of injected errors on 200 perturbed items for two judge scales of Qwen3-VL~\cite{bai2025qwen3}.}
\label{tab:app_judge}
\vspace{-1.5mm}
\setlength{\tabcolsep}{3pt}
\begin{tabular*}{\columnwidth}{@{\extracolsep{\fill}}lcc@{}}
\toprule
\textbf{Perturbation} & \textbf{Qwen3-VL-Plus} & \textbf{Qwen3-VL-235B} \\
\midrule
Wrong evidence boxes & 26.7\% & 51.1\% \\
Wrong count in answer & 50.0\% & 47.4\% \\
Extra spurious box & 83.3\% & 83.1\% \\
\bottomrule
\end{tabular*}
\end{table}

\section{Additional Qualitative Results on CARPK}
\label{app:qualitative}

Qualitative results on gRefCOCO are shown in Fig.~\ref{M-fig:grefcoco_main} of the main paper. Fig.~\ref{fig:carpk_vis} compares predictions on dense parking-lot scenes from CARPK. EtA-SFT already localizes most vehicles but omits a small fraction in the most crowded rows, recovering 29 of 31 cars with an F1 of 0.90. Vanilla GRPO does not close this gap and enumerates even fewer instances (27 of 31), because its reward provides no signal for objects that all rollouts omit jointly. ECPO produces the tightest and most complete box set, recovering 29 of 31 cars with an F1 of 0.97, which is consistent with the group-union mechanism that pools complementary detections across rollouts.

\begin{figure*}[!t]
\centering
\includegraphics[width=\textwidth]{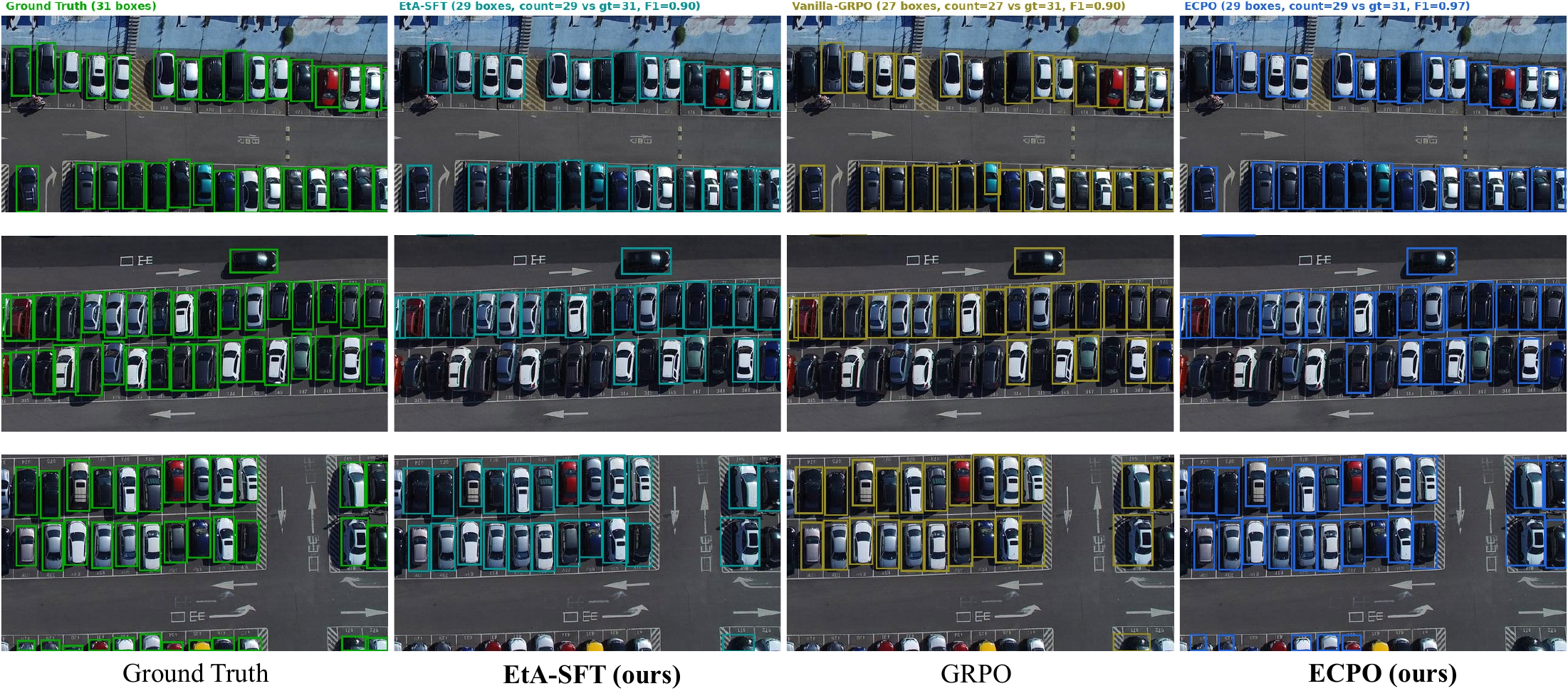}
\vspace{-3.5mm}
\caption{Qualitative comparison on CARPK. Boxes predicted by EtA-SFT, vanilla GRPO, and ECPO. ECPO recovers the most complete box set.}
\label{fig:carpk_vis}
\end{figure*}

\bibliographystyle{IEEEtran}
\bibliography{ref}

@inproceedings{zhou2025tumtraffic,
  title={Tumtraf videoqa: Dataset and benchmark for unified spatio-temporal video understanding in traffic scenes},
  author={Zhou, Xingcheng and Larintzakis, Konstantinos and Guo, Hao and Zimmer, Walter and Liu, Mingyu and Cao, Hu and Zhang, Jiajie and Lakshminarasimhan, Venkatnarayanan and Strand, Leah and Knoll, Alois},
  booktitle={Forty-second International Conference on Machine Learning},
  year={2025}
}

@inproceedings{sima2024drivelm,
  title={{DriveLM}: Driving with graph visual question answering},
  author={Sima, Chonghao and Renz, Katrin and Chitta, Kashyap and Chen, Li and Zhang, Hanxue and Xie, Chengen and Bei{\ss}wenger, Jens and Luo, Ping and Geiger, Andreas and Li, Hongyang},
  booktitle={European Conference on Computer Vision},
  pages={256--274},
  year={2024},
  organization={Springer}
}

@inproceedings{zhang2025mpdrive,
  title={{MPDrive}: Improving spatial understanding with marker-based prompt learning for autonomous driving},
  author={Zhang, Zhiyuan and Li, Xiaofan and Xu, Zhihao and Peng, Wenjie and Zhou, Zijian and Shi, Miaojing and Huang, Shuangping},
  booktitle={Proceedings of the Computer Vision and Pattern Recognition Conference},
  pages={12089--12099},
  year={2025}
}

@inproceedings{wu2025language,
  title={Language prompt for autonomous driving},
  author={Wu, Dongming and Han, Wencheng and Liu, Yingfei and Wang, Tiancai and Xu, Cheng-zhong and Zhang, Xiangyu and Shen, Jianbing},
  booktitle={Proceedings of the AAAI Conference on Artificial Intelligence},
  volume={39},
  number={8},
  pages={8359--8367},
  year={2025}
}

@inproceedings{xie2025vlms,
  title={Are {VLMs} Ready for Autonomous Driving? An Empirical Study from the Reliability, Data and Metric Perspectives},
  author={Xie, Shaoyuan and Kong, Lingdong and Dong, Yuhao and Sima, Chonghao and Zhang, Wenwei and Chen, Qi Alfred and Liu, Ziwei and Pan, Liang},
  booktitle={Proceedings of the IEEE/CVF International Conference on Computer Vision},
  pages={6585--6597},
  year={2025}
}

@inproceedings{ye2022rope3d,
  title={{Rope3D}: The roadside perception dataset for autonomous driving and monocular {3D} object detection task},
  author={Ye, Xiaoqing and Shu, Mao and Li, Hanyu and Shi, Yifeng and Li, Yingying and Wang, Guangjie and Tan, Xiao and Ding, Errui},
  booktitle={Proceedings of the IEEE/CVF Conference on Computer Vision and Pattern Recognition},
  pages={21341--21350},
  year={2022}
}

@inproceedings{yu2022dair,
  title={{DAIR-V2X}: A large-scale dataset for vehicle-infrastructure cooperative {3D} object detection},
  author={Yu, Haibao and Luo, Yizhen and Shu, Mao and Huo, Yiyi and Yang, Zebang and Shi, Yifeng and Guo, Zhenglong and Li, Hanyu and Hu, Xing and Yuan, Jirui and others},
  booktitle={Proceedings of the IEEE/CVF Conference on Computer Vision and Pattern Recognition},
  pages={21361--21370},
  year={2022}
}

@article{bai2025qwen2,
  title={{Qwen2.5-VL} technical report},
  author={Bai, Shuai and Chen, Keqin and Liu, Xuejing and Wang, Jialin and Ge, Wenbin and Song, Sibo and Dang, Kai and Wang, Peng and Wang, Shijie and Tang, Jun and others},
  journal={arXiv preprint arXiv:2502.13923},
  year={2025}
}

@article{yao2024minicpm,
  title={{MiniCPM-V}: A {GPT-4V} level {MLLM} on your phone},
  author={Yao, Yuan and Yu, Tianyu and Zhang, Ao and Wang, Chongyi and Cui, Junbo and Zhu, Hongji and Cai, Tianchi and Li, Haoyu and Zhao, Weilin and He, Zhihui and others},
  journal={arXiv preprint arXiv:2408.01800},
  year={2024}
}

@inproceedings{papineni2002bleu,
  title={{BLEU}: A method for automatic evaluation of machine translation},
  author={Papineni, Kishore and Roukos, Salim and Ward, Todd and Zhu, Wei-Jing},
  booktitle={Proceedings of the 40th annual meeting of the Association for Computational Linguistics},
  pages={311--318},
  year={2002}
}

@inproceedings{lin2004rouge,
  title={{ROUGE}: A package for automatic evaluation of summaries},
  author={Lin, Chin-Yew},
  booktitle={Text summarization branches out},
  pages={74--81},
  year={2004}
}

@inproceedings{vedantam2015cider,
  title={{CIDEr}: Consensus-based image description evaluation},
  author={Vedantam, Ramakrishna and Lawrence Zitnick, C and Parikh, Devi},
  booktitle={Proceedings of the IEEE conference on computer vision and pattern recognition},
  pages={4566--4575},
  year={2015}
}

@inproceedings{meteor,
  author    = {Lavie, Alon and Agarwal, Abhaya},
  title     = {{METEOR}: An Automatic Metric for {MT} Evaluation with High Levels of Correlation with Human Judgments},
  booktitle = {Proceedings of the Second Workshop on Statistical Machine Translation},
  year      = {2007},
  pages     = {228--231},
  doi       = {10.3115/1626355.1626389}
}

@article{zhou2024vision,
  title={Vision language models in autonomous driving: A survey and outlook},
  author={Zhou, Xingcheng and Liu, Mingyu and Yurtsever, Ekim and Zagar, Bare Luka and Zimmer, Walter and Cao, Hu and Knoll, Alois C},
  journal={IEEE Transactions on Intelligent Vehicles},
  year={2024},
  publisher={IEEE}
}

@inproceedings{deruyttere2019talk2car,
  title={{Talk2Car}: Taking Control of Your {Self-Driving} Car},
  author={Deruyttere, Thierry and Vandenhende, Simon and Grujicic, Dusan and Van Gool, Luc and Moens, Marie Francine},
  booktitle={Proceedings of the 2019 Conference on Empirical Methods in Natural Language Processing and the 9th International Joint Conference on Natural Language Processing (EMNLP-IJCNLP)},
  pages={2088--2098},
  year={2019}
}

@inproceedings{guan2024talk2radar,
  title={{Talk2Radar}: Bridging natural language with {4D} {mmWave} radar for {3D} referring expression comprehension},
  author={Guan, Runwei and Zhang, Ruixiao and Ouyang, Ningwei and Liu, Jianan and Man, Ka Lok and Cai, Xiaohao and Xu, Ming and Smith, Jeremy and Lim, Eng Gee and Yue, Yutao and others},
  booktitle={Proceedings of the IEEE International Conference on Robotics and Automation (ICRA)},
  year={2025}
}

@inproceedings{guan2026roadscenevqa,
  title={{RoadSceneVQA}: Benchmarking visual question answering in roadside perception systems for intelligent transportation system},
  author={Guan, Runwei and Hu, Rongsheng and Chen, Shangshu and Xiao, Ningyuan and Xia, Xue and Liu, Jiayang and Chen, Beibei and Tang, Ziren and Ouyang, Ningwei and Liang, Shaofeng and others},
  booktitle={Proceedings of the AAAI Conference on Artificial Intelligence},
  volume={40},
  number={6},
  pages={4366--4375},
  year={2026}
}

@article{chen2023shikra,
  title={{Shikra}: Unleashing multimodal {LLM}'s referential dialogue magic},
  author={Chen, Keqin and Zhang, Zhao and Zeng, Weili and Zhang, Richong and Zhu, Feng and Zhao, Rui},
  journal={arXiv preprint arXiv:2306.15195},
  year={2023}
}

@inproceedings{you2024ferret,
  title={{Ferret}: Refer and ground anything anywhere at any granularity},
  author={You, Haoxuan and Zhang, Haotian and Gan, Zhe and Du, Xianzhi and Zhang, Bowen and Wang, Zirui and Cao, Liangliang and Chang, Shih-Fu and Yang, Yinfei},
  booktitle={International Conference on Learning Representations},
  pages={57153--57180},
  year={2024}
}

@article{bai2023qwenvlversatilevisionlanguagemodel,
      journal={arXiv preprint arXiv:2308.12966},
      title={{Qwen-VL}: A Versatile {Vision-Language} Model for Understanding, Localization, Text Reading, and Beyond}, 
      author={Jinze Bai and Shuai Bai and Shusheng Yang and Shijie Wang and Sinan Tan and Peng Wang and Junyang Lin and Chang Zhou and Jingren Zhou},
      year={2023},
}

@article{bai2025qwen3,
  title={{Qwen3-VL} technical report},
  author={Bai, Shuai and Cai, Yuxuan and Chen, Ruizhe and Chen, Keqin and Chen, Xionghui and Cheng, Zesen and Deng, Lianghao and Ding, Wei and Gao, Chang and Ge, Chunjiang and others},
  journal={arXiv preprint arXiv:2511.21631},
  year={2025}
}

@inproceedings{chen2025automated,
  title={Automated evaluation of large vision-language models on self-driving corner cases},
  author={Chen, Kai and Li, Yanze and Zhang, Wenhua and Liu, Yanxin and Li, Pengxiang and Gao, Ruiyuan and Hong, Lanqing and Tian, Meng and Zhao, Xinhai and Li, Zhenguo and others},
  booktitle={2025 IEEE/CVF Winter Conference on Applications of Computer Vision (WACV)},
  pages={7817--7826},
  year={2025},
  organization={IEEE}
}

@article{ding2026grex,
  title={{GREx}: Generalized referring expression segmentation, comprehension, and generation},
  author={Ding, Henghui and Liu, Chang and He, Shuting and Jiang, Xudong and Jiang, Yu-Gang},
  journal={International Journal of Computer Vision},
  volume={134},
  number={2},
  pages={79},
  year={2026},
  publisher={Springer}
}

@inproceedings{peng2024grounding,
  title={Grounding multimodal large language models to the world},
  author={Peng, Zhiliang and Wang, Wenhui and Dong, Li and Hao, Yaru and Huang, Shaohan and Ma, Shuming and Ye, Qixiang and Wei, Furu},
  booktitle={International Conference on Learning Representations},
  pages={51575--51598},
  year={2024}
}

@inproceedings{wu2025routing,
  title={Routing experts: Learning to route dynamic experts in existing multi-modal large language models},
  author={Wu, Qiong and Ke, Zhaoxi and Zhou, Yiyi and Sun, Xiaoshuai and Ji, Rongrong},
  booktitle={The Thirteenth International Conference on Learning Representations},
  year={2025}
}

@article{fan2026grit,
  title={{GRIT}: Teaching {MLLMs} to think with images},
  author={Fan, Yue and He, Xuehai and Yang, Diji and Zheng, Kaizhi and Kuo, Ching-Chen and Zheng, Yuting and Guan, Xinze and Wang, Xin},
  journal={Advances in Neural Information Processing Systems},
  volume={38},
  pages={116522--116543},
  year={2026}
}

@article{yin2024woodpecker,
  title={{Woodpecker}: Hallucination correction for multimodal large language models},
  author={Yin, Shukang and Fu, Chaoyou and Zhao, Sirui and Xu, Tong and Wang, Hao and Sui, Dianbo and Shen, Yunhang and Li, Ke and Sun, Xing and Chen, Enhong},
  journal={Science China Information Sciences},
  volume={67},
  number={12},
  pages={220105},
  year={2024},
  publisher={Springer}
}

@inproceedings{wu2024v,
  title={{V}$^{*}$: Guided visual search as a core mechanism in multimodal {LLMs}},
  author={Wu, Penghao and Xie, Saining},
  booktitle={Proceedings of the IEEE/CVF Conference on Computer Vision and Pattern Recognition},
  pages={13084--13094},
  year={2024}
}

@inproceedings{ranjan2021learning,
  title={Learning to count everything},
  author={Ranjan, Viresh and Sharma, Udbhav and Nguyen, Thu and Hoai, Minh},
  booktitle={Proceedings of the IEEE/CVF conference on computer vision and pattern recognition},
  pages={3394--3403},
  year={2021}
}

@inproceedings{jiang2023clip,
  title={{CLIP-Count}: Towards text-guided zero-shot object counting},
  author={Jiang, Ruixiang and Liu, Lingbo and Chen, Changwen},
  booktitle={Proceedings of the 31st ACM International Conference on Multimedia},
  pages={4535--4545},
  year={2023}
}

@article{shao2024deepseekmath,
  title={{DeepSeekMath}: Pushing the limits of mathematical reasoning in open language models},
  author={Shao, Zhihong and Wang, Peiyi and Zhu, Qihao and Xu, Runxin and Song, Junxiao and Bi, Xiao and Zhang, Haowei and Zhang, Mingchuan and Li, YK and others},
  journal={arXiv preprint arXiv:2402.03300},
  year={2024}
}

@article{guo2025deepseek,
  title={{DeepSeek-R1} incentivizes reasoning in {LLMs} through reinforcement learning},
  author={Guo, Daya and Yang, Dejian and Zhang, Haowei and Song, Junxiao and Wang, Peiyi and Zhu, Qihao and Xu, Runxin and Zhang, Ruoyu and Ma, Shirong and Bi, Xiao and others},
  journal={Nature},
  volume={645},
  number={8081},
  pages={633--638},
  year={2025},
  publisher={Nature Publishing Group UK London}
}

@article{bai2025univg,
  title={{UniVG-R1}: Reasoning guided universal visual grounding with reinforcement learning},
  author={Bai, Sule and Li, Mingxing and Liu, Yong and Tang, Jing and Zhang, Haoji and Sun, Lei and Chu, Xiangxiang and Tang, Yansong},
  journal={arXiv preprint arXiv:2505.14231},
  year={2025}
}

@article{shen2025vlm,
  title={{VLM-R1}: A stable and generalizable {R1}-style large vision-language model},
  author={Shen, Haozhan and Liu, Peng and Li, Jingcheng and Fang, Chunxin and Ma, Yibo and Liao, Jiajia and Shen, Qiaoli and Zhang, Zilun and Zhao, Kangjia and Zhang, Qianqian and others},
  journal={arXiv preprint arXiv:2504.07615},
  year={2025}
}

@article{batra2025spatialthinker,
  title={{SpatialThinker}: Reinforcing {3D} Reasoning in Multimodal {LLMs} via Spatial Rewards},
  author={Batra, Hunar and Tu, Haoqin and Chen, Hardy and Lin, Yuanze and Xie, Cihang and Clark, Ronald},
  journal={arXiv preprint arXiv:2511.07403},
  year={2025}
}

@article{cao2025ground,
  title={{Ground-R1}: Incentivizing grounded visual reasoning via reinforcement learning},
  author={Cao, Meng and Zhao, Haoze and Zhang, Can and Chang, Xiaojun and Reid, Ian and Liang, Xiaodan},
  journal={arXiv preprint arXiv:2505.20272},
  year={2025}
}

@article{wan2026srpo,
  title={{SRPO}: Enhancing multimodal {LLM} reasoning via reflection-aware reinforcement learning},
  author={Wan, Zhongwei and Dou, Zhihao and Liu, Che and Zhang, Yu and Cui, Dongfei and Zhao, Qinjian and Shen, Hui and Xiong, Jing and Xin, Yi and Jiang, Yifan and others},
  journal={Advances in Neural Information Processing Systems},
  volume={38},
  pages={153676--153713},
  year={2026}
}

@inproceedings{wangself,
  title={{Self-Consistency} Improves Chain of Thought Reasoning in Language Models},
  author={Wang, Xuezhi and Wei, Jason and Schuurmans, Dale and Le, Quoc V and Chi, Ed H and Narang, Sharan and Chowdhery, Aakanksha and Zhou, Denny},
  booktitle={The Eleventh International Conference on Learning Representations},
  year={2023}
}

@inproceedings{chen2026grpo,
  title={{GRPO-CARE}: Consistency-aware reinforcement learning for multimodal reasoning},
  author={Chen, Yi and Ge, Yuying and Wang, Rui and Ge, Yixiao and Cheng, Junhao and Shan, Ying and Liu, Xihui},
  booktitle={Findings of the Association for Computational Linguistics: ACL 2026},
  pages={4305--4320},
  year={2026}
}

@article{zuo2026ttrl,
  title={{TTRL}: Test-time reinforcement learning},
  author={Zuo, Yuxin and Zhang, Kaiyan and Sheng, Li and Qu, Shang and Cui, Ganqu and Zhu, Xuekai and Li, Haozhan and Long, Xinwei and Hua, Ermo and Qi, Biqing and others},
  journal={Advances in Neural Information Processing Systems},
  volume={38},
  pages={131459--131483},
  year={2026}
}

@article{park2025dip,
  title={{DIP-R1}: Deep inspection and perception with {RL} looking through and understanding complex scenes},
  author={Park, Sungjune and Kim, Hyunjun and Kim, Junho and Kim, Seongho and Ro, Yong Man},
  journal={arXiv preprint arXiv:2505.23179},
  year={2025}
}

@inproceedings{qian2024nuscenes,
  title={{NuScenes-QA}: A multi-modal visual question answering benchmark for autonomous driving scenario},
  author={Qian, Tianwen and Chen, Jingjing and Zhuo, Linhai and Jiao, Yang and Jiang, Yu-Gang},
  booktitle={Proceedings of the AAAI Conference on Artificial Intelligence},
  volume={38},
  number={5},
  pages={4542--4550},
  year={2024}
}

@inproceedings{hudson2019gqa,
  title={{GQA}: A new dataset for real-world visual reasoning and compositional question answering},
  author={Hudson, Drew A and Manning, Christopher D},
  booktitle={Proceedings of the IEEE/CVF conference on computer vision and pattern recognition},
  pages={6700--6709},
  year={2019}
}

@inproceedings{yu2016modeling,
  title={Modeling context in referring expressions},
  author={Yu, Licheng and Poirson, Patrick and Yang, Shan and Berg, Alexander C and Berg, Tamara L},
  booktitle={European conference on computer vision},
  pages={69--85},
  year={2016},
  organization={Springer}
}

@inproceedings{lin2014microsoft,
  title={{Microsoft COCO}: Common objects in context},
  author={Lin, Tsung-Yi and Maire, Michael and Belongie, Serge and Hays, James and Perona, Pietro and Ramanan, Deva and Doll{\'a}r, Piotr and Zitnick, C Lawrence},
  booktitle={European conference on computer vision},
  pages={740--755},
  year={2014},
  organization={Springer}
}

@inproceedings{rohrbach2018object,
  title={Object hallucination in image captioning},
  author={Rohrbach, Anna and Hendricks, Lisa Anne and Burns, Kaylee and Darrell, Trevor and Saenko, Kate},
  booktitle={Proceedings of the 2018 conference on empirical methods in natural language processing},
  pages={4035--4045},
  year={2018}
}

@inproceedings{li2023evaluating,
  title={Evaluating object hallucination in large vision-language models},
  author={Li, Yifan and Du, Yifan and Zhou, Kun and Wang, Jinpeng and Zhao, Xin and Wen, Ji-Rong},
  booktitle={Proceedings of the 2023 conference on empirical methods in natural language processing},
  pages={292--305},
  year={2023}
}

@article{wang2025internvl3,
  title={{InternVL3.5}: Advancing open-source multimodal models in versatility, reasoning, and efficiency},
  author={Wang, Weiyun and Gao, Zhangwei and Gu, Lixin and Pu, Hengjun and Cui, Long and Wei, Xingguang and Liu, Zhaoyang and Jing, Linglin and Ye, Shenglong and Shao, Jie and others},
  journal={arXiv preprint arXiv:2508.18265},
  year={2025}
}

@article{ren2024dino,
  title={{DINO-X}: A unified vision model for open-world object detection and understanding},
  author={Ren, Tianhe and Chen, Yihao and Jiang, Qing and Zeng, Zhaoyang and Xiong, Yuda and Liu, Wenlong and Ma, Zhengyu and Shen, Junyi and Gao, Yuan and Jiang, Xiaoke and others},
  journal={arXiv preprint arXiv:2411.14347},
  year={2024}
}

@inproceedings{liu2024grounding,
  title={{Grounding DINO}: Marrying {DINO} with grounded pre-training for open-set object detection},
  author={Liu, Shilong and Zeng, Zhaoyang and Ren, Tianhe and Li, Feng and Zhang, Hao and Yang, Jie and Jiang, Qing and Li, Chunyuan and Yang, Jianwei and Su, Hang and others},
  booktitle={European conference on computer vision},
  pages={38--55},
  year={2024},
  organization={Springer}
}

@inproceedings{cheng2024yolo,
  title={{YOLO-World}: Real-time open-vocabulary object detection},
  author={Cheng, Tianheng and Song, Lin and Ge, Yixiao and Liu, Wenyu and Wang, Xinggang and Shan, Ying},
  booktitle={2024 IEEE/CVF Conference on Computer Vision and Pattern Recognition (CVPR)},
  pages={16901--16911},
  year={2024},
  organization={IEEE}
}

@inproceedings{ma2024groma,
  title={{Groma}: Localized visual tokenization for grounding multimodal large language models},
  author={Ma, Chuofan and Jiang, Yi and Wu, Jiannan and Yuan, Zehuan and Qi, Xiaojuan},
  booktitle={European Conference on Computer Vision},
  pages={417--435},
  year={2024},
  organization={Springer}
}

@article{yu2026perception,
  title={{Perception-R1}: Pioneering perception policy with reinforcement learning},
  author={Yu, En and Lin, Kangheng and Zhao, Liang and Wei, Yana and Peng, Yuang and Wei, Haoran and Sun, Jianjian and Han, Chunrui and Ge, Zheng and Zhang, Xiangyu and others},
  journal={Advances in Neural Information Processing Systems},
  volume={38},
  pages={94827--94853},
  year={2026}
}

@inproceedings{liu2026visionreasoner,
  title={{VisionReasoner}: Unified reasoning-integrated visual perception via reinforcement learning},
  author={Liu, Yuqi and Qu, Tianyuan and Zhong, Zhisheng and Peng, Bohao and Liu, Shu and Yu, Bei and Jia, Jiaya},
  booktitle={International Conference on Learning Representations},
  pages={94069--94086},
  year={2026}
}

@inproceedings{hsieh2017drone,
  title={Drone-based object counting by spatially regularized regional proposal network},
  author={Hsieh, Meng-Ru and Lin, Yen-Liang and Hsu, Winston H},
  booktitle={2017 IEEE International Conference on Computer Vision (ICCV)},
  pages={4165--4173},
  year={2017},
  organization={IEEE}
}

@article{shao2024visual,
  title={{Visual CoT}: Advancing multi-modal language models with a comprehensive dataset and benchmark for chain-of-thought reasoning},
  author={Shao, Hao and Qian, Shengju and Xiao, Han and Song, Guanglu and Zong, Zhuofan and Wang, Letian and Liu, Yu and Li, Hongsheng},
  journal={Advances in Neural Information Processing Systems},
  volume={37},
  pages={8612--8642},
  year={2024}
}

@inproceedings{cogcom2024,
 author = {Qi, Ji and Ding, Ming and Wang, Weihan and Bai, Yushi and Lv, Qingsong and Hong, Wenyi and Bin, Xu and Hou, Lei and Li, Juanzi and Dong, Yuxiao and Tang, Jie},
 booktitle = {International Conference on Learning Representations},
 pages = {11090--11110},
 title = {{CogCoM}: A Visual Language Model with {Chain-of-Manipulations} Reasoning},
 year = {2025}
}

@inproceedings{ding2021vision,
  author    = {Ding, Henghui and Liu, Chang and Wang, Suchen and Jiang, Xudong},
  title     = {Vision-Language Transformer and Query Generation for Referring Segmentation},
  booktitle = {Proceedings of the IEEE/CVF International Conference on Computer Vision},
  year      = {2021},
  pages     = {16301--16310},
  doi       = {10.1109/ICCV48922.2021.01601}
}

@inproceedings{kamath2021mdetr,
  author    = {Kamath, Aishwarya and Singh, Mannat and LeCun, Yann and Synnaeve, Gabriel and Misra, Ishan and Carion, Nicolas},
  title     = {{MDETR} - Modulated Detection for End-to-End Multi-Modal Understanding},
  booktitle = {Proceedings of the IEEE/CVF International Conference on Computer Vision},
  year      = {2021},
  pages     = {1760--1770},
  doi       = {10.1109/ICCV48922.2021.00180}
}

@inproceedings{yan2023universal,
  author    = {Yan, Bin and Jiang, Yi and Wu, Jiannan and Wang, Dong and Luo, Ping and Yuan, Zehuan and Lu, Huchuan},
  title     = {Universal Instance Perception as Object Discovery and Retrieval},
  booktitle = {Proceedings of the IEEE/CVF Conference on Computer Vision and Pattern Recognition},
  year      = {2023},
  pages     = {15325--15336},
  doi       = {10.1109/CVPR52729.2023.01471}
}

@inproceedings{liu2023gres,
  author    = {Liu, Chang and Ding, Henghui and Jiang, Xudong},
  title     = {{GRES}: Generalized Referring Expression Segmentation},
  booktitle = {Proceedings of the IEEE/CVF Conference on Computer Vision and Pattern Recognition},
  year      = {2023},
  pages     = {23592--23601},
  doi       = {10.1109/CVPR52729.2023.02259}
}

@inproceedings{fu2025llmdet,
  author    = {Fu, Shenghao and Yang, Qize and Mo, Qijie and Yan, Junkai and Wei, Xihan and Meng, Jingke and Xie, Xiaohua and Zheng, Wei-Shi},
  title     = {{LLMDet}: Learning Strong Open-Vocabulary Object Detectors under the Supervision of Large Language Models},
  booktitle = {Proceedings of the IEEE/CVF Conference on Computer Vision and Pattern Recognition},
  year      = {2025},
  pages     = {14987--14997},
  doi       = {10.1109/CVPR52734.2025.01396}
}

@article{zhu2025internvl3,
  author  = {Zhu, Jinguo and Wang, Weiyun and Chen, Zhe and Liu, Zhaoyang and Ye, Shenglong and Gu, Lixin and Tian, Hao and Duan, Yuchen and Su, Weijie and Shao, Jie and others},
  title   = {{InternVL3}: Exploring Advanced Training and Test-Time Recipes for Open-Source Multimodal Models},
  journal = {arXiv preprint arXiv:2504.10479},
  year    = {2025}
}

@misc{qwen2026qwen35,
  author       = {{Qwen Team}},
  title        = {{Qwen3.5}: Towards Native Multimodal Agents},
  howpublished = {\url{https://qwen.ai/blog?id=qwen3.5}},
  year         = {2026}
}

@article{an2026llavaonevision2,
  author  = {An, Xiang and Xie, Yin and Tang, Feilong and Yan, Yunyao and Tan, Huajie and Zhu, Didi and Chen, Changrui and Zhao, Xiuwei and Qin, Bin and Yang, Kaicheng and others},
  title   = {{LLaVA-OneVision-2}: Towards Next-Generation Perceptual Intelligence},
  journal = {arXiv preprint arXiv:2605.25979},
  year    = {2026}
}

@article{cui2026minicpmo,
  author  = {Cui, Junbo and Xu, Bokai and Wang, Chongyi and Yu, Tianyu and Sun, Weiyue and Xu, Yingjing and Wang, Tianran and He, Zhihui and Ma, Wenshuo and Cai, Tianchi and others},
  title   = {{MiniCPM-o} 4.5: Towards Real-Time Full-Duplex Omni-Modal Interaction},
  journal = {arXiv preprint arXiv:2604.27393},
  year    = {2026}
}

@misc{openai2025gpt5,
  author       = {{OpenAI}},
  title        = {{GPT-5} System Card},
  howpublished = {\url{https://openai.com/index/gpt-5-system-card/}},
  year         = {2025}
}

@article{dearruda2022counting,
  author  = {de Arruda, Mauro dos Santos and Osco, Lucas Prado and Acosta, Plabiany Rodrigo and Gon{\c{c}}alves, Diogo Nunes and Marcato Junior, Jos{\'e} and Ramos, Ana Paula Marques and Matsubara, Edson Takashi and Luo, Zhipeng and Li, Jonathan and de Andrade Silva, Jonathan and Gon{\c{c}}alves, Wesley Nunes},
  title   = {Counting and Locating High-Density Objects Using Convolutional Neural Network},
  journal = {Expert Systems with Applications},
  volume  = {195},
  pages   = {116555},
  year    = {2022},
  doi     = {10.1016/j.eswa.2022.116555}
}

@article{liu2026rar,
  title={Rar: Retrieving and ranking augmented mllms for visual recognition},
  author={Liu, Ziyu and Sun, Zeyi and Zang, Yuhang and Li, Wei and Zhang, Pan and Dong, Xiaoyi and Xiong, Yuanjun and Lin, Dahua and Wang, Jiaqi},
  journal={IEEE Transactions on Image Processing},
  year={2026},
  publisher={IEEE}
}

@article{fu2025boosting,
  title={Boosting Faithful Multi-Modal LLMs via Complementary Visual Grounding},
  author={Fu, Zheren and Mao, Zhendong and Zhang, Lei and Zhang, Yongdong},
  journal={IEEE Transactions on Image Processing},
  volume={34},
  pages={8641--8655},
  year={2025},
  publisher={IEEE}
}

@article{suo2022rethinking,
  title={Rethinking and improving feature pyramids for one-stage referring expression comprehension},
  author={Suo, Wei and Sun, Mengyang and Wang, Peng and Zhang, Yanning and Wu, Qi},
  journal={IEEE Transactions on Image Processing},
  volume={32},
  pages={854--864},
  year={2022},
  publisher={IEEE}
}

@article{xie2024knowledge,
  title={Knowledge-augmented visual question answering with natural language explanation},
  author={Xie, Jiayuan and Cai, Yi and Chen, Jiali and Xu, Ruohang and Wang, Jiexin and Li, Qing},
  journal={IEEE Transactions on Image Processing},
  volume={33},
  pages={2652--2664},
  year={2024},
  publisher={IEEE}
}

@article{du2023redesigning,
  title={Redesigning multi-scale neural network for crowd counting},
  author={Du, Zhipeng and Shi, Miaojing and Deng, Jiankang and Zafeiriou, Stefanos},
  journal={IEEE Transactions on Image Processing},
  volume={32},
  pages={3664--3678},
  year={2023},
  publisher={IEEE}
}

\end{document}